\newcommand{\A}[3]{\alpha_{#1 \leftarrow #2}(#3)}
\newcommand{\R}[3]{\rho_{#1 \rightarrow #2}(#3)}
\DeclareMathOperator*{\argmax}{arg\,max}
\DeclareMathOperator*{\argmin}{arg\,min}
\theoremstyle{plain}
\newtheorem{theorem}{Theorem}[section]
\theoremstyle{definition}
\theoremstyle{remark}
\begin{document}
% The \icmltitle you define below is probably too long as a header.
% Therefore, a short form for the running title is supplied here:
\title{Compositional Clustering: Applications to Multi-Label  Object Recognition and Speaker Identification}
\author[1]{Zeqian Li}
\ead{zli14@wpi.edu}
\author[1]{Xinlu He}
\ead{xhe4@wpi.edu}
\author[1]{Jacob Whitehill\corref{cor1}}
\ead{jrwhitehill@wpi.edu}
\affiliation[1]{organization={Worcester Polytechnic Institute}, addressline={100 Institute Road},
postcode={01609}, city={Worcester}, country={USA}}
\cortext[cor1]{Corresponding author}

% You may provide any keywords that you
% find helpful for describing your paper; these are used to populate
% the "keywords" metadata in the PDF but will not be shown in the document
\begin{keyword}
Clustering Algorithms,  Compositional Learning, Few-Shot Learning, Embedding Models, Speaker Diarization, Affinity Propagation
\end{keyword}

\begin{abstract}
We consider a novel clustering task in which 
clusters can have compositional relationships, e.g.,
one cluster contains images of rectangles, one  contains images of circles, and a third (compositional) cluster contains images with both objects. In contrast to hierarchical clustering in which a parent cluster represents the \emph{intersection} of properties of the child clusters, our problem is about finding compositional clusters that represent the \emph{union} of the properties of the constituent clusters.
This task is motivated by
recently developed few-shot learning and embedding models
\cite{alfassy2019laso,li2021compositional} can distinguish
the label \emph{sets}, not just the individual labels, assigned to the examples.
We propose three new algorithms -- Compositional Affinity Propagation (CAP), Compositional $k$-means (CKM), and Greedy Compositional Reassignment (GCR) -- that can  partition examples into coherent groups and infer the compositional structure among them. We show promising results, compared to popular algorithms such as Gaussian mixtures, Fuzzy $c$-means, and Agglomerative Clustering, on the
OmniGlot and LibriSpeech datasets. Our work has applications to open-world multi-label object recognition
and speaker identification \& diarization with simultaneous speech from multiple speakers.
\end{abstract}

\maketitle

\section{Introduction}
We consider a new kind of clustering problem in which clusters have compositional structure, in the
sense that each example in one cluster may exhibit the \emph{union} of the properties found in another set of clusters.
The goal is not just to partition the data into distinct and coherent groups, but also to infer the
compositional relationships among the groups.
%We consider a new kind of clustering problem in which \emph{each} example can be generated
%from a \emph{set} of clusters, rather than from just a single cluster.
This scenario arises in speaker diarization (i.e., infer who is speaking when from an audio wave)
in the presence of simultaneous speech from multiple speakers \cite{bullock2020overlap,zelenak2012simultaneous}, which occurs frequently in real-world speech settings: 
The audio at each time $t$ is generated as a \emph{composition} of the voices of all the people 
speaking at time $t$, and the goal is to cluster the audio samples, over all timesteps, into sets of 
speakers. Hence, if there are 2 people who sometimes speak by themselves and sometimes speak simultaneously, then
the clusters would correspond to the speaker sets $\{1\}$, $\{2\}$, and $\{1,2\}$ -- the third cluster is not a third independent speaker, but rather the composition of the first two speakers.
An analogous scenario arises in open-world (i.e., test classes are disjoint from training classes) multi-label object recognition when 
clustering images such that each image may contain multiple objects from a fixed set (e.g., the shapes in Figure~\ref{fig:illustration}).
In some scenarios, the \emph{composition} function
that specifies how examples are generated from other examples might be as simple as superposition by element-wise maximum
or addition. However, a more powerful form of composition -- and the main motivation for  our work --
is enabled by \emph{compositional embedding models}, which are a new technique for  few-shot learning.

\begin{figure}
\begin{center}
\includegraphics[width=0.7\textwidth]{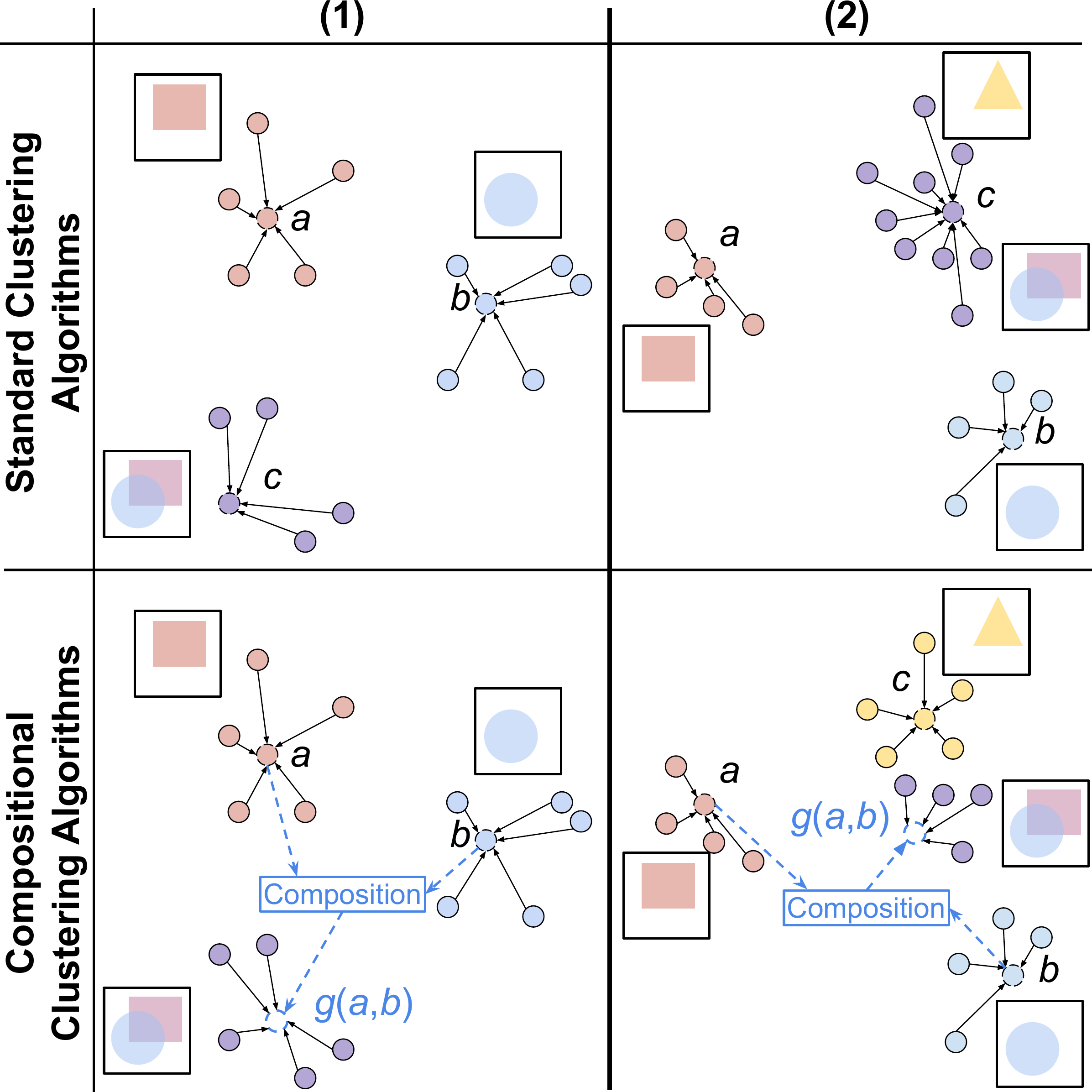}
\end{center}
\caption{{\em Conceptual overview of our paper}:
Scenario (1) shows clusters of images (containing rectangles, circles, or both) and their assigned exemplars (for exemplar-based methods) or centroids (for centroid-based methods) $a$, $b$, $c$, etc. Each arrow represents the assignment of an example to its cluster exemplar/centroid. Standard clustering algorithms such as $k$-means or  Affinity Propagation detect 3 clusters that are independent of each other. Compositional clustering algorithms like CAP, CKM, and GCR can infer that each example in the bottom/purple cluster is \emph{composed} (via $g$) of examples from clusters $a$ \& $b$. Scenario (2) illustrates how modeling compositionality can enable CAP and CKM to find purer clusters by not lumping the two sets of images (some with triangles, and some with rectangles \& circles) together.}
\label{fig:illustration}
\end{figure}
{\bf Compositional embedding models}: 
Standard (non-compositional) embedding models for few-shot learning such as FaceNet \cite{schroff2015facenet} and x-vector \cite{snyder2018x}
have an embedding function $f^\textrm{emb}$ (typically a neural network)
that maps each example (e.g., image, audio clip) into an embedding space so that examples with the same label
are mapped close together, and examples with different labels are mapped far apart.
Compositional embeddings \cite{alfassy2019laso,li2021compositional} go a step further and are trained to
separate not just individual labels, but entire \emph{sets} of labels.
As an example of how this is performed using the approach by \cite{li2021compositional}, suppose an image collection contains some images of rectangles, some of circles, and some of both (see Figure~\ref{fig:illustration}).
Then the embedding function $f^\textrm{emb}$ would induce three clusters in the embedding space
corresponding to $\{ \textrm{rectangle} \}, \{ \textrm{circle} \}$ and $\{ \textrm{rectangle},\textrm{circle} \}$.
In addition to $f^\textrm{emb}$, compositional embedding models 
have a \emph{composition} function $g$ that takes the embedding vectors $x_a,x_b$ of two examples and computes 
a set relationship between them. For instance, $g(x_a, x_b)$ might return another vector in the same embedding
space corresponding to where an example containing the \emph{union} of the labels in the two inputs would lie -- 
see  Figure~\ref{fig:illustration} (lower left). In particular, the training objective is for $g(x_a,x_b) \approx x_{ab}$, where $x_{ab}$ is the embedding of an example containing \emph{both} classes $a$ and $b$. By applying it recursively (e.g., $g(x_a, g(x_b, x_c)) \approx x_{abc}$), the same function $g$ can be used to estimate the embeddings of larger sets of examples as well. 
At test time, $f^\textrm{emb}$ and $g$ are used together (along with a support set of few-shot examples) to infer the \emph{set} of labels
in an example. Other recent works have explored a similar idea of training the embedding network for set operations such as union, difference, and containment \cite{song2021deep,zhou2021multi}, or to use the embedding space to synthesize feature vectors with specific properties \cite{hariharan2017low}.

{\bf Compositional clustering methods}:
In this paper we present and evaluate  three novel algorithms for tackling the ``compositional clustering'' problem: (1) {\bf Compositional $k$-means (CKM)}, which is a centroid-based clustering method; (2) 
{\bf Compositional Affinity Propagation (CAP)}, which is an exemplar-based method; and (3) {\bf Greedy Compositional Reassignment (GCR)}, which can be used in tandem with any standard clustering algorithm.
All three of these methods have the ability to assign each example to either a ``singleton'' cluster corresponding to a single class (e.g., a single speaker, or a single object) \emph{or} to a ``compositional'' cluster corresponding to the union of multiple classes (e.g., a set of speakers, or a set of objects). CKM and CAP have the additional ability to harness the compositional structure of the data to partition them more accurately than is possible with standard clustering algorithms.

As a conceptual illustration, see Figure \ref{fig:illustration}. In
scenario (1) (left half of the figure), there are three sets of images -- some contain circles, some contain rectangles,
and some contain both. Standard clustering algorithms such as Affinity Propagation and $k$-means can separate the data
correctly into three clusters. However, a compositional clustering algorithm such as CKM, CAP, or GCR can also infer that the cluster shown in purple in the bottom-left 
is actually a \emph{compositional cluster} in which each example contains \emph{both} objects from the first two clusters.
Scenario (2) in the figure shows how modeling the compositionality can yield a more accurate partition: whereas
standard clustering algorithms will lump together the images containing triangles with those containing a composition of rectangles and circles, CAP and CKM
can identify this relationship automatically and thereby obtain purer clusters.

{\bf General Workflow}: Here is how a compositional clustering algorithm can be used for open-world object recognition, speaker diarization, and similar tasks: The first step is to (1) train a compositional embedding model \cite{alfassy2019laso,li2021compositional} with both an embedding function $f^\textrm{emb}$ (e.g., with triplet loss, ArcFace loss \cite{deng2019arcface}, etc.)
as well as a composition function $g$ that computes the location in the embedding space corresponding to the \emph{set union} of the classes represented in its two input embeddings. Note that $g$ can be trained recursively \cite{li2021compositionalEmb} to enable the computation of set unions of \emph{arbitrary} size; moreover, it needs to be trained only once and can then be reused. Next, (2) compute the embeddings of all the speaker utterances (or images) in the dataset; we denote the set of these embeddings as $\mathcal{X} = \{ x_1, \ldots, x_n \}$. (3) Pass $\mathcal{X}$, as well as 
the composition network $g$, as input to the compositional clustering algorithm (CAP, CKM, or GCR). The clustering algorithm then (4) infers the cluster label -- which could be either a \emph{singleton} (a single speaker in isolation, or a single object appearing by itself) or a \emph{set} (multiple speakers in simultaneous speech, or multiple objects co-occuring in an image) -- of each example.

{\bf Contributions}: (1) We consider the computational problem of clustering data with compositional structure, particularly as afforded by compositional embedding models, in the setting where (a) the test classes are disjoint from training classes, (b) each example can belong to multiple classes, and (c) no information about the test classes (neither a support set, nor a semantic description) is given.  To our knowledge, this particular task has not been tackled previously. We also define a new accuracy metric, the Compositional Rand Index, for this problem.  (2) We present three novel clustering algorithms -- CAP, CKM, and GCR -- that can partition data and infer their compositional structure automatically. (3) We illustrate how these new methods can infer the clusters, as well as  their compositional relationships, more accurately compared to standard clustering algorithms (Affinity Propagation, $k$-means, Gaussian mixtures, etc.) in two challenging application areas: speaker recognition from speech with multiple overlapping speakers, and multi-label object recognition in open-world scenarios.\footnote{The data and code are available at
\url{https://github.com/jwhitehill/CompositionalClustering}.
}

%{\bf Outline}: We first describe existing work and how it differs from ours. Next, we review the standard Affinity Propagation algorithm and how loopy belief propagation via message passing is used to find approximately optimal solutions. We then present our novel clustering algorithm, Compositional Affinity Propagation, and derive an inference algorithm. Finally, we show experimental results compared to several baseline clustering algorithms.

\section{Related Work}

\subsection{Multi-Label Few-Shot and Zero-Shot Learning}
The past 5 years have seen significant growth in the fields of multi-label few-shot and zero-shot learning (e.g., \cite{lee2018multi,narayan2021discriminative,chen2020knowledge,huynh2021interaction}). Much of this work relies on the existence of a knowledge graph such as WordNet \cite{miller1995wordnet}, a word embedding space such as GloVe \cite{pennington2014glove}, or external attribute vectors, to represent relationships among classes and thereby enable the model to generalize to data from unseen classes at test time. In contrast, the compositional embedding models of \cite{alfassy2019laso} and \cite{li2021compositional}, and thus our work as well, make no such assumption -- each class can be completely independent of each other. 
To our best knowledge, no prior work has investigated how to \emph{cluster} examples automatically when the test classes are disjoint from training classes, when no support sets are provided, and when no semantic information about the test classes is provided. (Note that, when few-shot examples are provided for the test classes, then the ``clustering'' problem becomes trivial -- the examples can be grouped based just on their estimated label vectors.)

\subsection{Clustering}
To our best knowledge, no previous clustering algorithm   can both cluster a dataset and infer the compositionality
among clusters. (A recent paper \cite{pal2022clustering} examines how to cluster data that is ``compositional'' in the  sense that they lie on a simplex and thus the features within every example ``compose'' to 1, but this is very different from our scenario.)
Below we discuss the most similar work.

{\bf Mixture models}, such as the Mixture of Gaussians fit using Expectation-Maximization, the Dirichlet mixture process \cite{blei2006variational}, and the fuzzy $k$-means clustering algorithm \cite{bezdek1984fcm},
extend the standard $k$-means  algorithm by ``softly'' assigning each data point
to a probability distribution over the mixture components instead of giving a ``hard'' assignment like in $k$-means.
Importantly, these approaches assume that each data point is generated by a
\emph{single} cluster, and the probability distribution expresses the uncertainty over which cluster it is.
They can capture compositionality only in a limited sense by assuming that examples that lie between two (or more) cluster centroids belong to both (or all) of these clusters. These methods cannot
distinguish between an example that is unconfidently assigned to a single
cluster (thus resulting in high entropy over the mixture components for that example),
from an example that is confidently assigned to multiple clusters. Moreover, they will fail if the compositional cluster (e.g., the purple cluster in the left half of Figure \ref{fig:illustration}) does not lie near the mean of its constituent singleton clusters (the red and blue clusters in the figure).

{\bf Hierarchical clustering algorithms} create a tree (dendrogram) such that 
the $n$ leaf nodes correspond 1-to-1 to the examples in the dataset,
and each internal node $i$ represents a cluster 
whose members consist of all the leaf nodes descending from $i$. Internal nodes closer to the root
correspond to higher-level abstractions of the data.
Hierarchical clustering algorithms can work either
top-down by splitting clusters or bottom-up by merging clusters, until some clustering criterion
is reached. One popular variant is
\emph{Agglomerative Clustering} using the \citet{ward1963hierarchical} criterion,
which seeks to minimize the variance within each cluster. In all cases, hierarchical clustering algorithms
assign each example to a sequence of clusters of increasing generality, starting from the internal node just above  the
leaf all the way up to the root node, such that each parent cluster 
captures the \emph{intersection} of the characteristics of the child clusters. In contrast, our
proposed method can assign each example to contain the \emph{union} of the properties in multiple clusters;
this is tantamount to a dendrogram where each example is connected by an edge to multiple parent nodes, thus
yielding a directed acyclic graph rather than a tree.

{\bf Multi-view clustering algorithms} (e.g., \citet{bickel2004multi})
partition the feature space into
multiple subsets, each corresponding to a different ``view'' of the data (see \cite{yang2018multi,fu2020overview} for recent surveys). For instance, each example might
be a video and thus have both auditory and visual features associated with it. Since multiple views often contain complementary information, harnessing all of them can often improve clustering accuracy. Moreover,
the structure of the data
from one view can provide implicit supervision when clustering using the other views. However, existing multi-view clustering methods do not
have the ability to model compositionality. \citet{franklin2018compositional} recently proposed
a method for ``compositional clustering in task structure learning'', but their method is more akin
to multi-view clustering, and the compositionality pertains to how they tackled a control problem
(separately addressing the reward and transition functions), not the clustering problem itself.

{\bf Exemplar-based} clustering algorithms differ from {\bf centroid-based} algorithms in how clusters are represented:
In the former, each cluster is represented by a specific example in the dataset; in contrast, the latter  (e.g., $k$-means)
may compute a function of the examples (e.g., the mean) to represent the cluster. One of the mostly widely used
exemplar-based clustering algorithms is Affinity Propagation \cite{frey2007clustering}.

\section{Approach I: Compositional Affinity Propagation (CAP)}
Our first novel algorithm is Compositional Affinity Propagation, which is an exemplar-based clustering method and based on standard Affinity Propagation (AP) algorithm that is widely used for speaker diarization to group clusters of  utterances into distinct speakers \cite{yin2018neural,li2021compositionalEmb}. CAP is based on an undirected probabilistic graphical model whose likelihood is approximately optimized using discrete optimization. Before presenting CAP,  we first review  standard AP \cite{frey2007clustering}.

\subsection{Review of Affinity Propagation}
Let $\mathcal{X} = \{ x_1, \ldots, x_n \} \subset \mathbb{R}^p$ be a dataset, and let
$\mathcal{C} = {\{1,\ldots,n\}} \doteq [n]$ be the set of indices of the (embedded) examples in $\mathcal{X}$.
Next, let $c_1,\ldots, c_n \in \mathcal{C}$  be the cluster assignments: Each $c_i$ denotes the \emph{exemplar} representing
the cluster to which example $i$ belongs; if example $i$ itself is the exemplar for some other example $j\ne i$, then we require $c_i=i$.
For instance, if $\mathcal{X}$ contains $n=3$ examples, the first two of which
belong to the same cluster and the third of which belongs to its own cluster,
then we might have $c_1=2,c_2=2,c_3=3$ (or possibly $c_1=1,c_2=1,c_3=3$).
Let $S: \mathcal{C} \times \mathcal{C} \rightarrow [-\infty, 0]$ map from a pair
of example indices to the negative (squared) distance between the examples, i.e., $S(i,j) = - \| x_i - x_j \|^2$ for $i \ne j$; and let $S$ map to a constant value for $i=j$, i.e., $S(i,i)=\gamma$, where $\gamma$  is the ``preference'' 
(a hyperparameter)
that $x_i$ is an exemplar, where  larger negative values discourage those examples from becoming exemplars.
From these definitions, we can formulate the following constrained optimization problem:
\[
\argmax_{c_1,\ldots,c_n \in \mathcal{C}} \sum_{i=1}^n S(i,c_i) \qquad \textrm{s.t.}\quad (\exists i:c_i=k) \implies c_k=k
\]
The objective is the sum of  distances between each point and its assigned exemplar, and the constraints
enforce consistency that examples used by others as exemplars also designate themselves as exemplars.
The optimization  has to weigh the cost $\gamma$ of creating a new cluster
against assigning examples to existing exemplars that are farther away.

{\bf Illustration}: Given an appropriate choice for the $\gamma$, Affinity Propagation would yield the results
shown in the top half of Figure \ref{fig:illustration}. In particular, in scenario (1), the cluster shown in purple would be identified
as an independent cluster with examplar $c$, and in scenario (2), the cluster shown in purple would contain the images
with triangles as well as those composed of rectangles  and circles.

{\bf Inference}: \citet{frey2007clustering} showed a procedure %(here we provide a minimal summary)
to find approximately optimal solutions by defining
a \emph{factor graph}  % (see Figure \ref{fig:model})
to represent the variables and constraints, %in a probabilistic framework,
where $S$  is interpreted as containing log-likelihoods,
and then applying loopy belief propagation. This results in a new optimization problem where the goal is to find
maximum a posteriori (MAP) solutions to
$\argmax_{c_1,\ldots,c_n \in \mathcal{C}} P(c_1, \ldots, c_n\ |\ S)$.
where probability distribution $P$ is understood to encode the constraints.
% \begin{figure}
% \begin{center}
% \includegraphics[width=2in]{}
% \end{center}
% \caption{Factor graph for Affinity Propagation.}
% \label{fig:model}
% \end{figure}
Specifically, the factor graph contains variable nodes to represent $c_1,\ldots,c_n$
and factor nodes to represent both the log-likelihoods $S(1,\cdot), \ldots, S(n,\cdot)$ and a set of \emph{constraints} $\delta_1,\ldots, \delta_n$. Each $\delta_k$ encodes whether $c_k$ is compatible with the other $c_{k'\ne k}$:
\begin{equation}
\label{eqn:delta}
\delta_k(c_1,\ldots,c_n)= \left\{ \begin{array}{cl}- \infty & \textrm{if}\ \exists i: (c_i=k) \wedge (c_k\ne k) \\
                                                     0 & \textrm{otherwise}
				   \end{array} \right.
\end{equation}
Given the factor graph, a sequence of ``messages'' (functions $\alpha, \rho: [n]\times[n]\times \mathcal{C} \rightarrow  [-\infty, 0]$) is passed back and forth between the variable and factor nodes.
Each variable $i$ sends a message $\R{i}{k}{c_i}$ to constraint $k$, and
each constraint $k$ sends a message $\A{i}{k}{c_i}$ to variable $i$, about the likelihood of each possible value of $c_i$.
The values of $\alpha$ and $\rho$ are determined by the max-product algorithm for loopy belief propagation \cite{weiss2001optimality} applied to
the factor graph (see the Appendices).
To find an approximate MAP estimate for all the $c_i$, we  alternate between 
computing the $\alpha$'s and the $\rho$'s. Finally, after any number of iterations, we compute $c_i^\textrm{MAP} = \argmax_{c_i} \left[ \sum_k \A{i}{k}{{c_i}} + S(i,{c_i}) \right]$.
\citet{frey2007clustering} also presented an efficient ($O(n^2)$) method to calculate all the messages for each iteration.
%Though convergence of the algorithm is not guaranteed, in practice the algorithm is often highly effective and efficient.

\subsection{Procedure: Compositional Affinity Propagation}

% differs by using a composition function $g$ to assign examples to the pseudo-centroid 

% to assign examples to the ``pseudo-centroid'' corresponding 

% from a compositional embedding model in order to assign 

% infer ``compositional clusters'' in addition to the standard ``singleton clusters''. CAP generalizes the
% widely used Affinity Propagation (AP) algorithm \cite{frey2007clustering}, which assigns to each data point 
% $x_i$ the index $c_i$ of an ``exemplar''  that represents the cluster to which $x_i$ belongs.
% CAP extends AP by allowing each $c_i$ to denote a \emph{set} of exemplars rather than just
% a single exemplar. During inference, it uses the composition function $g$ to infer the relationships between clusters. In this paper, we define the CAP model and derive an efficient algorithm
% to perform inference in time $O(dn^{d+1})$ per iteration, where $n$ is the number of examples in the
% dataset and $d$ is the size of the largest composition (e.g., number of simultaneous speakers) under consideration.

Here we describe our proposed Compositional Affinity Propagation algorithm. At a high level,  CAP innovates on classic AP by allowing each cluster to be represented by not just a single example (``singleton’’ cluster), but rather an entire \emph{set} of examples (``compositional’’ cluster). Importantly, the examples in this set need not be semantically similar or lie close to each other in the feature space; rather, the \emph{union} of the characteristics of the examples in this set should be present in \emph{each} of the examples belonging to the compositional cluster. In terms of the inference procedure, CAP is somewhat more complex than standard AP due to the need, as part of the max-product algorithm, to compute the maximum values of many subsets efficiently (FindAllMaxes).

Let $\mathcal{X} = \{ x_1, \ldots, x_n \} \subset \mathbb{R}^p$ be a dataset. Let
$\mathcal{C} \subset 2^{[n]} \setminus \emptyset$ be the set of compositions of examples in $\mathcal{X}$ under consideration, where
we assume $\mathcal{C}$ contains all the singletons, i.e., $\{i\} \in \mathcal{C}, i=1,\ldots, n$.
Let $d = \max_{c \in \mathcal{C}} |c|$, i.e., the size of the largest composition under consideration. 
%To reference each composition, we assume a bijection
%$z: [m] \rightarrow \mathcal{C}$ such that $z(i)=\{i\}, i=1,\ldots,n$.
To identify which compositions contain (or do not contain) each example $k$, define functions
$\phi, \overline{\phi}: [n] \rightarrow 2^\mathcal{C}$
such that $\phi(k) = \{ c \in \mathcal{C}: c \ni k \}$ and $\overline{\phi}(k) = \mathcal{C} \setminus \phi(k)$.

Let $f$ be defined as in standard Affinity Propagation. We further assume there is a function $g: 2^\mathcal{X} \setminus \emptyset \rightarrow \mathbb{R}^p$ that consumes
a set of examples and produces another vector representing  their composition; for singleton sets, we let $g$ be the identity function, i.e.,
$g(\{ x \})=x$. For instance, $g$ could be the element-wise
maximum to perform pixel-wise superposition of the images;  for word embeddings, it could be element-wise addition \cite{allen2019analogies};
or it could be a trained neural network within a compositional embedding model.  We define 
$S: [n] \times \mathcal{C} \rightarrow [-\infty,0]$ to measure the  distance between each example and each composition:
$S(i,c)=- \| x_i - g(\{ x_k: k \in c \}) \|$ for $c \ne \{i\}$,
and $S(i,\{i\})=\gamma$ is a hyperparameter for each example.

Finally, define  $c_1,\ldots, c_n \in \mathcal{C}$ as the assignment of which example belongs to which cluster.
If $c_i=\{k\}$ (i.e., a singleton), then example $i$ belongs to a \emph{singleton} cluster with exemplar $x_k$.
If $|c_i|\geq 2$, then example $i$ belongs to the cluster with a \emph{compositional exemplar} $g(\{x_k:k \in c_i\})$,
i.e., the composition of all the examples in $c_i$. Note that, in general, 
compositional exemplars are not members of $\mathcal{X}$.  In CAP, we require that, 
whenever some example $i$ designates its exemplar either to \emph{be} example $k$ ($c_i=\{k\}$), or to \emph{include}
example $k$ ($c_i \ni k$), then example $k$ must designate itself as an exemplar ($c_k = \{ k \}$).
Example: if $\mathcal{X}=\{x_1,x_2,x_3,x_4\}$ and we allow compositions of size at most 2, then
$\mathcal{C}=\{\{1\}, \{2\}, \{3\}, \{4\}, \{1,2\}, \{1,3\}, \{1,4\}, \{2,3\}, \{2,4\},\{3,4\}\}$; if $x_1$ and $x_2$
each constitutes its own cluster and the last two examples are both assigned to the composition
of the first two clusters, then we would have $c_1=\{1\}, c_2=\{2\}, c_3=c_4=\{1,2\}$.

Our new constrained optimization problem is thus:
\[
\argmax_{c_1,\ldots,c_n \in \mathcal{C}} \sum_{i=1}^n S(i,c_i)\quad \textrm{s.t.}\quad (\exists i:c_i\ni k) \implies c_k=\{k\}
\]
Importantly, the optimization objective incurs \emph{no additional cost} when an example is assigned to a compositional exemplar $c_k$ as long as
all of the examples $k' \in c_k$ have themselves already been designated as exemplars.

{\bf Illustration}: Given an appropriate choice for $\gamma$, CAP
would yield the results
show in the bottom half of Figure \ref{fig:illustration}. In scenario (1), the cluster shown in purple would be identified
as a cluster with a \emph{compositional exemplar}. In scenario (2), 
the compositional structure identified by the algorithm could help it to separate the images containing triangles from those
containing both a rectangle and a circle.

\subsubsection{Inference}
As with standard Affinity Propagation, we find a MAP estimate for each $c_i$ by defining a factor graph and
computing and passing messages between the variables and the factors. %The definitions of $\alpha$ and $\rho$ remain the same as Eqns.~\ref{eqn:rho} and \ref{eqn:alpha}. 
We adjust the definition of $\delta_k$ to be:
\begin{equation}
\label{eqn:delta2}
\delta_k(c_1,\ldots,c_n)= \left\{ \begin{array}{cl}- \infty & \textrm{if}\ \exists i: (c_i\ni k) \wedge (c_k\ne \{k\}) \\
                                                     0 & \textrm{otherwise}
				   \end{array} \right.
\end{equation}
%From this point on, the Compositional Affinity Propagation algorithm differs significantly from standard Affinity Propagation.
In the Appendices, we derive a procedure to compute  $\alpha$ and $\rho$ efficiently; see  
Algorithm \ref{alg:cap}. %(Python code is also available in the Github repository). %At a high level, the derivation of CAP is similar to that of standard Affinity Propagation and follows the max-product algorithm.  %At a low level, most of the derivation involves  algebraic simplifications to reduce the amount of computation, and also in devising an efficient approach find the maxima of many subsets efficiently.

%While our derivations below draw inspiration from \citeauthor{frey2007clustering}'s work, they are substantially different due to the different constraints in our model.
\begin{theorem}
Let $n$ be the number of examples in a dataset $\mathcal{X}$, and let $d$ be the largest 
element in the set $\mathcal{C}$ containing all compositions under consideration.
Then Algorithm 1 implements message passing (i.e., computation of sufficient statistics of $\alpha$ and $\rho$)
for Compositional Affinity Propagation and operates in time $O(dn^{d+1})$ per iteration.
\end{theorem}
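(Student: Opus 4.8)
The plan is to establish two things: (i) \emph{correctness} --- that Algorithm~\ref{alg:cap} reproduces the max-product (max-sum) updates on the CAP factor graph --- and (ii) the $O(dn^{d+1})$ bound on the work done in one such update.

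For correctness I would instantiate the max-product message-passing equations of \citet{weiss2001optimality} on the CAP factor graph, whose factors are the $n$ unary log-likelihoods $S(i,\cdot)$ together with the $n$ constraint factors $\delta_1,\dots,\delta_n$ of \eqref{eqn:delta2}. Writing the factor-to-variable messages $\A{i}{k}{\cdot}$ and the variable-to-factor messages $\R{i}{k}{\cdot}$ explicitly, and using that $\delta_k=-\infty$ precisely when some $c_i\ni k$ while $c_k\ne\{k\}$, a short case split --- on whether $i=k$, and on whether the argument lies in $\phi(k)$ or $\overline{\phi}(k)$ --- yields the key structural fact: $\A{i}{k}{c_i}$ depends on $c_i$ only through whether $k\in c_i$ (with the single exception, when $i=k$, of having to distinguish $c_k=\{k\}$ from $c_k\ne\{k\}$). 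Thus each availability message is described by $O(1)$ numbers, its ``sufficient statistics.'' Substituting these into $\R{i}{k}{c_i}=S(i,c_i)+\sum_{k'\ne k}\A{i}{k'}{c_i}$ and peeling off the $c_i$-independent part, each responsibility value reduces to $S(i,c_i)$ plus at most $|c_i|\le d$ membership corrections. I would then check line by line that Algorithm~\ref{alg:cap} evaluates exactly these expressions; the only step that is not immediate is \textsc{FindAllMaxes}, which must output $\max_{c\in\phi(k)}\rho_i(c)$ and $\max_{c\in\overline{\phi}(k)}\rho_i(c)$ for every $i$ and every $k$, and whose correctness follows directly from its specification.

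For the time bound, first $|\mathcal{C}|\le\sum_{t=1}^{d}\binom{n}{t}=O(n^{d})$. Forming the full responsibilities $\rho_i(c)=S(i,c)+\sum_{k'}\A{i}{k'}{c}$ for all $i$ and all $c\in\mathcal{C}$ costs $O(d)$ per pair --- after precomputing, per $i$, the sum of the default (``not-contained'') summary values of the availabilities $\A{i}{k'}{\cdot}$ and then adding the $\le d$ membership corrections for the indices in $c$ --- hence $O(dn^{d+1})$ overall (the distances $S(i,c)$, which invoke $g$ on the $O(n^{d})$ subsets, are computed once up front and not charged to an iteration). \textsc{FindAllMaxes} is then run per $i$: the ``$k\in c$'' maxima are obtained by one pass over $\mathcal{C}$, scattering each $\rho_i(c)$ to the $\le d$ indices $k\in c$, at cost $O(d|\mathcal{C}|)=O(dn^{d})$; the ``$k\notin c$'' maxima are obtained by computing the global maximizer $c^\star$ in $O(n^{d})$, which already settles every $k\notin c^\star$, and then re-maximizing over $\{c:k\notin c\}$ for each of the $\le d$ indices $k\in c^\star$, again $O(dn^{d})$. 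Summing over the $n$ choices of $i$ gives $O(dn^{d+1})$. Everything else --- assembling the $O(n^{2})$ summarized messages $\alpha,\rho$ from these maxima via leave-one-out sums over the remaining variables --- is $O(n^{2})$ and therefore lower order, so the per-iteration cost is $O(dn^{d+1})$.

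I expect the one genuinely delicate point to be the complexity of \textsc{FindAllMaxes}: the obvious implementation of the ``$k\notin c$'' maxima, looping over all pairs $(c,k)$ with $k\notin c$, costs $\Theta(n^{d+1})$ per $i$ and hence $\Theta(n^{d+2})$ in total, missing the claimed bound by a factor of $n/d$. Recovering the bound hinges on the combinatorial observation that the global maximizer $c^\star$ has at most $d$ elements, so it is already the correct answer for all but at most $d$ indices $k$, leaving only those few to recompute. Two bookkeeping subtleties would also need care but do not affect the asymptotics: (a) the index $k'=i$ plays a special role in $\rho_i$ because $\delta_i$ already forbids $c_i$ from being a non-singleton set containing $i$, so ``$i\in c$'' and ``$c=\{i\}$'' must be treated separately there; and (b) the customary message normalization, which I would handle by subtracting a reference value so that all messages remain in $[-\infty,0]$ as declared.
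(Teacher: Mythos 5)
Your proposal is correct, and its overall architecture matches the paper's: correctness via a case split on the max-product messages showing that each $\A{i}{k}{\cdot}$ takes only two effective values (one on $\phi(k)$, one on $\overline{\phi}(k)$), then the $O(dn^d)$-per-$i$ computation of $q(i,\cdot)=\sum_{k'}\A{i}{k'}{\cdot}$ via a base term plus at most $d$ membership corrections, the subset maxima $r(k),s(k)$, and $O(1)$ leave-one-out adjustments to recover $b,\overline{b},a,\overline{a}$. Where you genuinely diverge is on the one step you correctly flag as delicate, namely computing $s(k)=\max_{c\in\overline{\phi}(k)}q(i,c)$ for all $k$ within the budget: you take the global maximizer $c^\star$ over $\mathcal{C}$, observe that it already answers every $k\notin c^\star$, and re-maximize over $\overline{\phi}(k)$ only for the at most $d$ indices $k\in c^\star$, giving $O(dn^d)$ per $i$. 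The paper instead partitions $\mathcal{C}$ into blocks $\psi_\tau$ indexed by size-$(j-1)$ prefixes $\tau$, computes the largest and second-largest values of $q$ within each block, and updates each $s(k)$ in $O(1)$ per block; this is what Algorithm \ref{alg:findmaxes} literally implements. Both arguments are valid and yield the same $O(dn^{d+1})$ per-iteration bound; yours is simpler to state and verify as a mathematical claim, while a line-by-line verification of the pseudocode as written still requires the paper's block-wise top-two scheme (or a replacement of Algorithm \ref{alg:findmaxes} by your global-top-plus-recompute variant). The bookkeeping caveats you raise --- the degenerate case $i=k$, $c_i\ni i$, $c_i\ne\{i\}$ handled by setting $S(i,c_i)=-\infty$, and keeping messages bounded --- match the paper's treatment and do not affect the asymptotics.
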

\begin{proof}
See Appendices.
\end{proof}

%We provide an efficient Python/numpy implementation in the Supplementary Materials.
%Further performance improvement could likely be achieved by porting it to a GPU.

\begin{algorithm}
\begin{algorithmic}
\caption{Compositional Affinity Propagation (CAP)}\label{alg:cap}
\STATE \hspace{-.4cm} $\textrm{\bf CAP}(S, \mathcal{C})$:
  \STATE $\phi(k) \gets \{ c \in \mathcal{C}: c \ni k \},\quad \overline{\phi}(k) \gets \mathcal{C} \setminus \phi(k) \quad \forall k$
  \STATE $q(i,c_i)\gets 0 \quad \forall i,c_i$
  \STATE $a(i,k)\gets 0, \quad \overline{a}(i,k)\gets 0  \quad \forall i,k$
  \WHILE{not converged}
    \STATE $b,\overline{b},h \gets \textrm{ComputeRhoStats}(S, \mathcal{C}, \phi, \overline{\phi}, a, \overline{a}, q)$
    \STATE $a,\overline{a},q \gets \textrm{ComputeAlphaStats}(\mathcal{C}, b,\overline{b},h)$
  \ENDWHILE
  \STATE  {\bf return} $\argmax_{c_i} (q(i,c_i) + S(i,c_i)) \quad \forall i$
%\end{algorithmic}
%\end{algorithm}

%\begin{algorithm}
%\begin{algorithmic}
%\caption{$\textrm{ComputeRhoStats}(S,a,\overline{a},q)$}\label{alg:rho}
\vspace{.2cm}
\STATE \hspace{-.4cm} $\textrm{\bf ComputeRhoStats}(S, \mathcal{C}, \phi, \overline{\phi}, a,\overline{a},q)$:
  \FOR{$i=1,\ldots,n$}
          \STATE $r,s \gets \textrm{FindAllMaxes}(S(i,\cdot) + q(i,\cdot), \mathcal{C}, \phi, \overline{\phi})$
          \FOR{$k=1,\ldots, n$}
            \STATE $b(i,k) \gets \max(r(k) - a(i,k), s(k) - \overline{a}(i,k))$
            \STATE $\overline{b}(i,k) \gets s(k) - \overline{a}(i,k)$
            %diag..
          \ENDFOR
  \ENDFOR
  \FOR{$k=1,\ldots,n$}
    \STATE $h(k) \gets S(k,\{k\}) + q_k(\{k\}) - a(k,k)$
  \ENDFOR
  \STATE {\bf return} $b,\overline{b},h$
%\end{algorithmic}
%\end{algorithm}
%
%\begin{algorithm}
%\begin{algorithmic}
\vspace{.2cm}
\STATE \hspace{-.4cm} $\textrm{\bf ComputeAlphaStats}(\mathcal{C}, b,\overline{b},h)$:
%\caption{$\textrm{ComputeAlphaStats}(b,\overline{b},h)$}\label{alg:alpha}
  \FOR{$k=1,\ldots, n$}
    \STATE $e(k) \gets \sum_{i' \ne k} b(i',k)$
    \STATE $\overline{e}(k) \gets \sum_{i' \ne k} \overline{b}(i',k)$
    %\STATE $\overline{e}(k) \gets \sum_{i' \ne k} \overline{b}(i',k)$
  \ENDFOR
  \FOR{$i=1,\ldots,n$}
    \FOR{$k=1,\ldots, n$}
      \IF{$i=k$}
        \STATE $a(i,k) \gets e(k)$
        \STATE $\overline{a}(i,k) \gets \overline{e}(k)$
        %\STATE $\overline{a}(i,k) \gets \overline{e}(k)$
      \ELSE
        \STATE $a(i,k) \gets h(k) + e(k) - b(i,k)$
        \STATE $\overline{a}(i,k) \gets \max(\overline{b}(k,k) + \overline{e}(k) - \overline{b}(i,k),\  h(k) + e(k) - b(i,k))$
      \ENDIF
    \ENDFOR
  \ENDFOR
  \FOR{$i=1,\ldots,n$}
    \STATE ${q}^*(i) \gets \sum_{k'} \overline{a}(i,k')$
    \FOR{$c_i\in \mathcal{C}$}
      \STATE $q(i,c_i) \gets {q}^*(i) + \sum_{k' \in c_i} (a(i,k') - \overline{a}(i,k'))$
    \ENDFOR
  \ENDFOR
  \STATE  {\bf return} $a,\overline{a},q$
%\end{algorithmic}
%\end{algorithm}
%
\end{algorithmic}
\end{algorithm}

\begin{algorithm}
\begin{algorithmic}
\caption{Finding Maxima of Many Subsets}\label{alg:findmaxes}
\vspace{.2cm}
\STATE \hspace{-.4cm} $\textrm{\bf FindAllMaxes}(q, \mathcal{C}, \phi, \overline{\phi})$:
%\caption{$\textrm{FindAllMaxes}(q,\phi,\overline{\phi},d)$}\label{alg:find_all_maxes}
\STATE $r(k) \gets \max q(\phi(k))\quad \forall k$
\STATE $s(k) \gets -\infty\quad \forall k$
%\STATE $s(k) \gets -\infty\quad \forall k$
\FOR{$j = 1, \ldots, d$}
  \FOR{$\tau=\{t_1,\ldots,t_{j-1}\}\ \textrm{s.t.}\ \exists t_j>t_{j-1}: \{t_1,\ldots,t_j\} \in \mathcal{C}$}
    \STATE $\psi_\tau\gets\{\{t_1,...,t_{j-1},t_j\} \}_{t_j>t_{j-1}} \cap \mathcal{C}$
    \STATE $c^1, c^2 \gets \argmax^{1,2}_{c \in \psi_\tau} q(c)$
      \FOR{$k=1,\ldots,n$}
        \IF{$c^1 \in \overline{\phi}(k)$}
          \STATE $s(k) \gets \max(s(k), q(c^1))$
        \ELSIF{$c^2 \in \overline{\phi}(k)$}
          \STATE $s(k) \gets \max(s(k), q(c^2))$
        \ENDIF
      \ENDFOR
  \ENDFOR
\ENDFOR
\STATE  {\bf return} $r,s$
\end{algorithmic}
\end{algorithm}

{\bf Inferring the Number of Clusters}:
The hyperparameter $\gamma$ in CAP is the penalty for creating a new cluster versus assigning data points to an existing one. It is similar to the concentration hyperparameter (often denoted $\alpha$) in the Dirichlet mixture process \cite{blei2006variational}.

\subsection{CAP$\subset$: An Approximation to CAP}
To improve the scalability of CAP, we can apply it
to a randomly selected subset of examples $\widetilde{\mathcal{X}} \subset \mathcal{X}$
and infer the cluster assignments $\tilde{c}_1,\ldots,
\tilde{c}_{|\widetilde{\mathcal{X}}|}$. Let $\mathcal{E} = \{ \tilde{c}_i  \}_{i=1}^{|\widetilde{\mathcal{X}} |}\subset \mathcal{C}$ be the set of \emph{unique} exemplars (singleton or compositional) inferred for $\widetilde{\mathcal{X}}$.
Then, for each example $x_i$ in the original dataset $\mathcal{X}$, we designate its
exemplar to be the $\tilde{c}_i \in \mathcal{E}$ that is closest to it $x_i$ according to $f$.
Specifically, we assign $c_i = \argmin_{\tilde{c}_i \in \mathcal{E}} \| x_i, g(\{ x_j: j \in \tilde{c}_i \}) \|$.
We call this method CAP$\subset$. %, and we apply it to subsets of different sizes depending on the particular dataset (see Section \ref{sec:experiments}).
%In practice, we found that this subsampling heuristic improves both the speed and the stability of the optimization procedure.

\section{Approach II: Compositional $k$-means}
The second compositional clustering algorithm we propose is called Compositional $k$-means (CKM). In contrast to CAP, which uses discrete optimization to assign examples to exemplars, CKM uses gradient descent to minimize a sum of squared distances by adjusting the real-valued cluster centroids. Like CAP, the CKM method can potentially cluster the data in Figure \ref{fig:illustration} more accurately by harnessing the composition function $g$ to infer which examples belong to singleton clusters versus compositional clusters. CKM is a centroid-based method rather than an exemplar-based clustering method. Hence, each cluster assignment variable $c_i$ is a subset of $[k]$ (rather than of $[n]$, like with CAP).

\subsection{Review of classic $k$-means}
Given the number of clusters $k$ as input, classic $k$-means seeks to assign each of the $n$ examples to one of the $k$ clusters (denoted $c_i \in [k]$ for each $i)$, so as to minimize the  sum of squared distances (SSD)
    $\textrm{SSD}(\{ m_j \}_{j=1}^k, \{ c_i \}_{i=1}^n) = \sum_{i=1}^n \|x_i - m_{c_i}\|^2$.
Here, each $m_j \in \mathbb{R}^p$ is a cluster centroid, and each $c_i \in [k]$ is a cluster index. To (locally) minimize the SSD, two steps are executed in alternation until convergence:
\begin{enumerate}
    \item  Assign each $x_i$ to the cluster $j$ whose centroid $m_j \in \mathbb{R}^p$ is closest to $x_i$; and
    \item Compute each centroid $m_j$ as the mean of the points assigned to cluster $j$.
\end{enumerate}
In particular, the second step is the closed-form minimizer of the SSD w.r.t.~the centroids $m_j$.
%{\bf Convergence}:
Since each of these steps is guaranteed not to increase the SSD,
and since a lower bound on SSD is always 0, the algorithm is guaranteed to converge to a local minimum.

\subsection{Procedure: Compositional $k$-means}
\label{sec:ckm}
Let the number of singleton clusters $k$ (e.g., the number of individual speakers in the audio, or the number of basic object classes in the image set) be known, and let $\mathcal{K} \subset 2^{[k]}$ be a set of possible compositions of the singleton clusters, where we require that $\mathcal{K}$ contains all the singletons, i.e., $\{ i \} \in \mathcal{K}, i=1,\ldots,k$. Assume composition function $g$ is differentiable. CKM seeks to assign each $x_i$
to \emph{either} one of $k$ singleton clusters (a single person speaking in isolation, or a single object by itself) \emph{or} to a compositional cluster (the composition of multiple speakers in an audio, or multiple objects in an image) so as to minimize the following sum of squared distances:
\begin{equation}
    \label{eqn:ssd_ckm}
    \textrm{SSD}(\{ m_{\{j\}} \}_{j=1}^k, \{ c_i \}_{i=1}^n) = \sum_{i=1}^n \|x_i - m_{c_i}\|^2
\end{equation}
where each compositional centroid $m_{\eta}=g(\{ m_{ \{j\} } \}_{j\in \eta})$ (for $\eta \in \mathcal{K}$ and $|\eta|>1$) is computed using the composition function $g$. (Note the small difference in notation compared to the SSD in standard $k$-means in the subscript of $m$ so as to emphasize that a cluster centroid may represent the composition of other clusters.)

Like the classic $k$-means, the SSD is a function of the \emph{singleton} cluster centroids (i.e., $m_{\{1\}}, \ldots, m_{\{k\}}$). Unlike classic $k$-means, the CKM method can assign each example to either a singleton or a compositional cluster. By adjusting  the singleton centroids, the locations of the \emph{compositional} centroids -- and thus the SSD value itself -- are also affected due to their dependence via $g$.

At a high level, CKM works as follows: After initializing the singleton cluster centroids randomly and computing the compositional centroids using $g$, a two-step alternating procedure is executed whereby (a) each example $x_i$ is assigned to the closest centroid (either singleton or compositional), and (b) the \emph{singleton} centroids $m_{\{1\}}, \ldots, m_{\{k\}}$ are adjusted using gradient descent (with learning rate $\epsilon$) to reduce the SSD in Equation \ref{eqn:ssd_ckm}. Since  we assume $g$ is a differentiable function (typically implemented as a neural network), the gradient of the SSD w.r.t.~each singleton centroid (keeping the weights of $g$ fixed) can be computed easily. During the optimization (see Algorithm \ref{alg:ckm}), CKM  dynamically infers which clusters are singletons and which are compositional, and also estimates the centroids of the singleton clusters so as to trade off between fitting the singletons and  the compositional clusters well. Note that (like with classic $k$-means) the initialization in step 1 can affect which local minimum is reached, and thus it is often useful to
try multiple random seeds and to choose the best seed based on the lowest SSD. 

\begin{algorithm}
\begin{algorithmic}
\caption{Compositional $k$-means (CKM)}\label{alg:ckm}
\STATE \hspace{-.4cm} $\textrm{\bf CKM}(\mathcal{X}, \mathcal{K}, \epsilon)$:
  \STATE Set each $m_{\{j\}}, j \in [k]$ to a randomly drawn (without replacement) example in $\mathcal{X}$.
  %\STATE Compute singleton centroids: $m_{\eta} \gets \frac{1}{|\{i: c_i=\eta\}|}\sum_{i:c_i=\eta} x_i\quad \forall \eta \in \mathcal{K}: |\eta|=1$.
  \STATE Compute compositional centroids: $m_{\eta} \gets g(\{ m_{ \{j\} } \}_{j\in \eta})\quad \forall \eta \in \mathcal{K}: |\eta|>1$.
  \WHILE{not converged}
    \STATE $c_i \gets \argmin_{\eta \in \mathcal{K}} \|x_i - m_\eta\|^2\quad \forall i$.
    \STATE $m_{\{1\}},\ldots,m_{\{k\}} \gets \textrm{SGD}\left( \sum_{i=1}^n \|x_i - m_{c_i}\|^2; \{ m_{\{j\}} \}_{j=1}^k; \epsilon \right)$
  \ENDWHILE
  \STATE {\bf return} $\{ c_i \}_{i=1}^n$
\end{algorithmic}
\end{algorithm}

% In particular, Then we follow the procedure below:
% \begin{enumerate}
% \item Randomly pick (without replacement) one example as the centroid of each  singleton clusters, i.e., randomly set each $c_i \in \mathcal{K}$ s.t.~$|c_i|=1$.
% \item Compute all centroids:
% \begin{enumerate}
%     \item For each singleton cluster ($\eta \in \mathcal{K}$ s.t.~$|\eta|=1$), compute its centroid by averaging the examples assigned to it: $m_{\eta}=\frac{1}{|\{i: c_i=\eta\}|}\sum_{i:c_i=\eta} x_i$.
%     \item For each compositional cluster ($\eta \in \mathcal{K}$ s.t.~$|\eta|>1$), compute its centroid using the composition function $g$ as: $m_{\eta}=g(\{ m_\{j\} \}_{j\in \eta})$.
% \end{enumerate}
% \item Repeat until convergence:
% \begin{enumerate}
%     \item Assign each $x_i$ to the closest centroid: $c_i=\argmin_{\eta \in \mathcal{K}} |x_i - m_\eta|^2$.
%     \item Conduct gradient descent to reduce the SSD in Equation \ref{eqn:ssd_ckm} by optimizing the singleton centroids $m_{\{1\}},\ldots,m_{\{k\}}$.
% \end{enumerate}
% \end{enumerate}

{\bf Convergence}:
Unlike in classical $k$-means,
the second step of the alternation (adjustment of cluster centroids) in CKM is conducted numerically rather than analytically. However, assuming  the learning
rate of gradient descent is sufficiently small, it will not increase the SSD. Since the first step of the alternation can also never increase the SSD, and since the SSD is bounded below, the algorithm will converge to a local minimum.

{\bf Comparison to CAP}: CKM requires that $g$ be differentiable, and it uses gradient-based optimization using neural network packages such as TensorFlow, PyTorch, etc. In contrast, CAP, as it only involves computing maxes and sums, can be implemented in simple Python or C code, and it does not require a differentiable $g$. Each step of the while-loop takes runtime $O(nk^d)$, where $d$ is the size of the largest composition in $\mathcal{K}$. Since the number of singleton clusters $k$ is typically much smaller than the number of examples $n$, CKM can run much faster than CAP.

{\bf Inferring the Number of Clusters}:
While CKM takes the number of singleton clusters $k$ as input, it infers the number of compositional clusters automatically based on the data -- if no examples are assigned to a particular composition, then that compositional cluster does not exist. Moreover, the value $k$ itself can be estimated by techniques such as the Gap statistic \cite{tibshirani2001estimating} that is commonly used for standard $k$-means clustering.

\section{Approach III: Greedy Compositional Reassignment (GCR)}

The third approach that we explored for compositional clustering is based
on the idea  of using any standard clustering algorithm to partition the data $\mathcal{X}$ into clusters, and then using the composition function
$g$ to find the optimal ``reassignment'' of the inferred clusters so that some of them are considered to be compositions of others. Suppose
we first obtain (e.g., from Agglomerative Clustering) a set $\mathcal{E} = \{ m_{\{1\}}, \ldots, m_{\{k\}} \}$ of  $k$ cluster centroids. Then we could iterate over every possible subset $\mathcal{\tilde E} \subseteq \mathcal{E}$; these represent the compositional
clusters. For each $\mathcal{\tilde E}$, we conduct an inner-loop
to iterate over every possible 1-to-1 map from $\mathcal{\tilde E}$ to the set of compositions (via $g$) of $\mathcal{E} \setminus \mathcal{\tilde E}$;
these are the singleton/singleton clusters.
We would finally select $\mathcal{\tilde E}$ and its map to $\mathcal{E} \setminus \mathcal{\tilde E}$ so as to minimize
the sum of distances between the examples and their assigned cluster centroids (either singletons or compositional).

Unfortunately, due to the factorial time cost, this brute-force approach quickly becomes completely impractical (e.g., for $|\mathcal{E}|=15$ and $d=2$,
there are $107770296705436$ possibilities). However,
the idea gave us inspiration for a tractable greedy heuristic 
that we call Greedy Compositional Reassignment (GCR). Like CKM, GCR is a centroid-based clustering method. It uses $g$ and the distances between cluster centroids  to determine the compositional relationships in a greedy manner and thereby avoid the factorial time cost. %It is also constrained to find compositions of size at most 2, which is limiting in some cases but still useful in others (e.g., many natural conversations contain simultaneous speech from at most 2 speakers).

% The reason for the c_i \in SingletonClasses condition is that, assuming the compositional embedding model f&g is trained correctly, the centroid of a singleton cluster is unlikely to be very close to g(\mu_i1, \mu_i2) for any i1,i2; hence, as soon as we encounter a c_i in the loop corresponding to a singleton cluster, all the remaining indices i’>i are also likely to be singleton clusters.

% Question regarding CC: suppose you have 4 clusters (1,2,3,4), and you have the following relationships among the d_i (as you defined in the paper): d1 < d4 < d2 < d3. In particular, suppose that d1 is the distance between centroid 1 and g(mu2, mu3), and that d4 is the distance between centroid 4 and g(mu1, mu2). Then, you would first process cluster 1 and decide that it is the composition of 2+3 and mark 2 and 3 as singletons, right? But what happens when you next process cluster 4: would you decide that 4 is the composition of 1+2? But 1 was already marked as a compositional cluster.

\subsection{Procedure: Greedy Compositional Reassignment}
Assume that a standard clustering method (we use Agglomerative Clustering) has produced a clustering with $k$ centroids $m_{\{1\}},\ldots, m_{\{k\}}$ and cluster assignments $c_1, \ldots, c_n$, where each $c_i \in [k]$. Let $\mathcal{K}$ be the set of compositions under consideration.%, with the restriction that $|\eta|\leq 2$ for every $\eta \in \mathcal{K}$.
GCR first uses $g$ to compute the location of the compositional centroid for every $\eta \in \mathcal{K}$.
It then finds, for each putative singleton cluster $j \in [k]$, the distance $d_j$ to the closest compositional centroid $b_j \in \mathcal{K}$; if $d_j$ is below a threshold $\tau$, then cluster $j$ is concluded to actually be a composition of the two other clusters in $b_j$, and all the examples that were previously assigned to cluster $j$ are reassigned to the compositional cluster $b_j$. The process is repeated  for each singleton cluster $j$ according to the distances $d_j$ sorted from smallest to largest until one of several possible termination conditions are reached (so as to maintain consistency, e.g., avoid cycles of compositionality); once this point is reached, all the remaining clusters that were not reassigned to be compositional are deemed to be singletons. The algorithm uses sets $\mathcal{S}$ and $\mathcal{T}$  to keep track of which clusters have been assigned as singletons and which are assigned as compositions, respectively. The final assignment of an example to a cluster index is denoted $c'_i \in \mathcal{K}$ for each example $i \in [n]$. See Algorithm \ref{alg:GCR} for details.

\begin{algorithm}
\begin{algorithmic}
\caption{Greedy Compositional Reassignment (GCR)}\label{alg:GCR}
\STATE \hspace{-.4cm} $\textrm{\bf GCR}(\mathcal{X}, \mathcal{K}, \tau)$:
  \STATE $\mathcal{S} \gets \emptyset,\quad \mathcal{T} \gets \emptyset$.
  \STATE Obtain preliminary clustering: $\{ m_{\{j\}} \}_{j=1}^k, \{ c_i \}_{i=1}^n \gets \begin{tt}AgglomerativeClustering\end{tt}(\mathcal{X})$.
  \STATE Compute compositional centroids: $m_\eta \gets g(\{ m_{\{j\}} \}_{j \in \eta})\quad \forall \eta \in \mathcal{K}$.
  %\COMMENT{Compute distances from putative singletons to compositions}
  \STATE $b_j \gets \argmin_{\eta} \|m_{\{j\}} - m_\eta\| \quad \forall j \in [k]$.
  \STATE $d_j \gets \|m_{\{j\}} - m_{c_j}\|\quad \forall j \in [k]$.
  \FOR{$j \in [k]$ according to $\begin{tt}argsort\end{tt}(\{d_j\})$}
    \IF{$d_{j} \geq \tau$ or $j \in \mathcal{S}$ or $b_j \cap \mathcal{T} \ne \emptyset$}
      \STATE Assign remaining clusters $j'>j$ to  singletons:
      $c'_i \gets \{j'\} \quad \forall i: c_i=j', j'\geq j$.
      \STATE {\bf break}
    \ENDIF
    \STATE Assign cluster $j$ to  composition: $c'_i \gets b_j \quad \forall i: c_i=j$.
    \STATE Add the clusters in $b_j$ to the set of singletons: $\mathcal{S} \gets \mathcal{S} \cup b_j$.
    \STATE Add the current cluster ($j$) to the set of compositions: $\mathcal{T} \gets \mathcal{T} \cup \{ j \}$.
  \ENDFOR
  \STATE {\bf return} $\{ c'_i \}_{i=1}^n$.
\end{algorithmic}
\end{algorithm}
 
% Then for each $\eta \in \mathcal{K}$ s.t.~$|\eta|>1$, we compute the compositional centroid $m_\eta=g(\{ m_{\{j\}} \}_{j \in \eta})$.

% Next, for every singleton cluster $i \in [k]$, we find the minimum distance $d_i$ between its centroid and every compositional centroid:
% \begin{eqnarray*}
% c_i = \arg\min_{\eta: |\eta|>1} \| m_{\{i\}} - m_\eta \|^2  \\
% d_i = \| m_{\{i\}} - m_{\{c_i\}} \|^2
% \end{eqnarray*}

% We then sort the $d_i$ from smallest to largest and loop through them: For each $d_i$, we use the corresponding $c_i$ to find out the two singleton classes $\{{c_i}_1, {c_i}_2\}$ that it is composed by.  The $i^{th}$ cluster has the label of set $\{{c_i}_1, {c_i}_2\}$ and we also add these singleton classes to a set $SingletonClasses$.
% $\mathcal{S} \leftarrow \mathcal{S} \cup c_i$
% We also add the compositional class $c_i$ to a set $CompositionalClasses$:
% $\mathcal{C} \leftarrow \mathcal{C} \cup \{ i \}$

% We will stop the loop if $d_i > thrld$ or $c_i \in SingletonClasses$ or ${c_i}_1 \in CompsitionalClasses$ or ${c_i}_2 \in CompsitionalClasses$, where $thrld$ is a hyperparameter. The rest of the clusters that are not looped through have singleton class labels.

{\bf Comparison to CAP and CKM}: Excluding the runtime cost of the initial clustering using Agglomerative Clustering,  GCR method is much faster than CAP and CKM since it iterates over the $k$ singleton clusters at most once, and each iteration is  simple. %In contrast to the other methods, GCR is limited to compositions of size at most 2; on the other hand, this is arguably the most common case in speaker diarization anyhow (i.e., at most two people speaking simultaneously in a conversation).
Note  that, whereas CKM and CAP can use the compositionality to partition the data into clusters more cleanly (see the right half of Figure \ref{fig:illustration}), GCR cannot -- it only has the ability to infer the compositional relationships among already-formed clusters (left half of Fig.~\ref{fig:illustration}).

{\bf Inferring the Number of Clusters}:
Since GCR first runs a standard clustering algorithm as a subroutine, then any technique that can estimate the number of clusters for that clustering algorithm (e.g., Gap statistic) can also be used for GCR.

\section{Experiments}
\label{sec:experiments}
To evaluate the proposed algorithms, we conducted  experiments, using standard datasets that are widely used for few-shot learning research, on both multi-object image recognition and multi-person speaker diarization with overlapping speech. We follow the workflow described in the Introduction, i.e., for each problem domain, we use few-shot learning to train an embedding function $f^\textrm{emb}$ to separate examples by their classes, as well as a composition function $g$ \cite{alfassy2019laso,li2021compositional} that can estimate the location in the embedding space of the union of multiple sets of classes. We train these models jointly and episodically, where the episodes contains examples from different set of classes.
%compositional clustering experiments in a few-shot learning setting using a compositional embedding approach \cite{li2021compositional}. When the goal is to infer clusters and their compositional relationships, the accuracy will depend both on the clustering algorithm itself as well as how innately separable the examples are in the feature space.
{\bf Evaluation}: Since standard clustering metrics such as the 
Adjusted Rand Index (ARI) do not capture compositionality, we   devise a new  evaluation metric called the Compositional Rand Index. %, and also describe the baseline methods we used for comparison.

\subsection{Evaluation Metric: the Compositional Rand Index (CRI)}
\label{sec:cri}
Suppose  dataset $\mathcal{X}=\{x_1,\ldots, x_n\}$ contains $l$ singleton clusters and some number (possibly 0) of compositional clusters.
Then 
$\mathcal{Y} = 2^{[l]} \setminus \emptyset$ is the set of all possible  ground-truth cluster labels, and $y_1,\ldots,y_n \in \mathcal{Y}$
are the  cluster assignments. A sensible evaluation criterion of some inferred labels $c_1,\ldots,c_n \in \mathcal{C}$
w.r.t.  ground-truth should capture the \emph{number} of clusters, their \emph{purity},
and their \emph{compositional relationships}. It should
\emph{not} depend on the particular naming of cluster labels
(the identifiability issue). With these goals in mind, we propose the
Compositional Rand Index (CRI)
to compute the probability, over all pairs $i\ne j$, that 
the inferred labels agree with the ground-truth about whether the cluster assignment of example $i$ subsumes
the cluster assignment of example $j$:
\begin{eqnarray}
\textrm{CRI}(c_1,\ldots,c_n, y_1,\ldots,y_n)
=
\frac{1}{n(n-1)} \sum_{i\ne j} \mathbb{I}[\mathbb{I}[c_i \supseteq c_j] = \mathbb{I}[y_i \supseteq y_j]]
\end{eqnarray}
where $\mathbb{I}[\cdot]$ is a 0-1 indicator function. For datasets without compositionality (i.e., $\mathcal{Y}=[l]$), CRI is equivalent to the standard Rand Index \cite{rand1971objective}.

\subsection{Baseline Methods}
We 
chose several baselines that seemed the most reasonable alternative approaches, even if they had no explicit ability to model compositionality.

{\bf Ignore compositionality}: One approach is simply to ignore the compositional relationships among clusters and consider each cluster as completely independent; this is illustrated in Figure \ref{fig:illustration} (top row). Any standard clustering method can thus be used. While it will pay a penalty under the CRI metric since it misses the compositional relationships, it can sometimes (Figure \ref{fig:illustration} upper-left, but not upper-right) still do a good job overall by correctly forming coherent clusters. With this motivation, we use standard {\bf Affinity Propagation (AP)} as well as {\bf Agglomerative Clustering (AC)} with the Ward criterion as two baselines. 

{\bf Infer compositionality from ``soft'' label assignments}: Mixture models such as the classic Mixture of Gaussians and fuzzy $k$-means \cite{bezdek1984fcm} (it is more commonly called ``fuzzy $c$-means'' in the literature) assign to each example a vector of probabilities that express the likelihood that it belongs to each of the $k$ clusters. By thresholding these probabilities with some threshold $\tau$, one can obtain a \emph{set} of cluster labels for each example. This method can work if the embedding space is structured so that examples whose cluster label set is $\{a,b\}$ lie near the midpoint between those examples whose  label set is $\{a\}$ and those whose label set is $\{b\}$. Based on this approach, we tested both {\bf Fuzzy $c$-Means (FCM)} and {\bf Gaussian Mixture Models (GMM)} as baselines. %; to give it a fair chance of succeeding, we optimize the threshold $\tau$ as well as the algorithm's temperature  hyperparameter $m$.

{\bf Oracle singleton clustering}:
To assess how well a \emph{perfect} clustering method would work that can determine the cluster memberships exactly but not infer compositionality, we  include an Oracle Singleton Clustering (OSC) baseline. Note  it is not meaningful when measuring  ARI (since it would be 100\%); hence, we use it only for the CRI.

\subsection{Experiment I: LibriSpeech}
\label{sec:libri}
\begin{figure}
\begin{center}
\includegraphics[width=\textwidth,trim={0 1.36cm 0 0},clip]{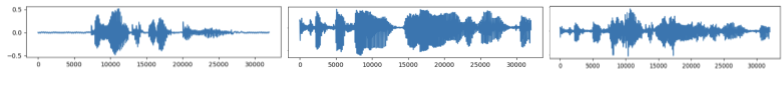}
\end{center}
\caption{Some audio examples from LibriSpeech. From left to right are the waveforms from speaker 1, speaker 2, and  overlapping speech from both speakers.}
\label{fig:librispeech_examples}
\end{figure}
Real-world conversations and meetings often contain moments when multiple people are speaking simultaneously (due to interruptions, sub-group conversations, etc.). Hence, an important few-shot learning problem is to identify the \emph{set} of people speaking at any given moment in time, where the classes (people) at test time usually differ from the classes at training time. 
We thus used the LibriSpeech \cite{panayotov2015librispeech} dataset to explore how well each clustering method
can cluster speech samples into speaker sets and infer the compositional relationships between sets.
LibriSpeech is a corpus of approximately 1000 hours of English audiobook speech from 2484 speakers.
While it contains only individual speakers, we can synthesize simultaneous speech by combining individual tracks, similarly to work by \cite{fujita2019end,hershey2016deep,menne2019analysis}. See
Figure \ref{fig:librispeech_examples} for some examples of the audio waveforms. 

{\bf Embedding model}: We used LibriSpeech to train a
compositional embedding model ($f^\textrm{emb}$ and $g$) for speaker verification using an LSTM neural network 
on top of MFCC audio features (see Appendices for details). Importantly, none of the classes (speakers) that were used for optimizing these networks were used in the clustering experiments. \footnote{For the baseline clustering methods, we also tried training a simpler embedding model   $f^\textrm{emb}$ \emph{without} jointly training $g$ to check whether that gave better performance. However, we found that this actually resulted in worse performance for the baselines, and hence we abandoned it.}

{\bf Procedure}: Our first experiment considers compositionality of degree at most $d=2$. We created datasets $\mathcal{X}$ of size $n\in \{150,750,1500,7500,15000\}$; each dataset contained speech segments from 5 different speakers (picked from 100 speakers which were not seen during training). Some of the segments contained single speakers, and some contained combinations of two speakers. Hence, there were  ${5\choose 1}+{5\choose 2}=15$ different unique speaker sets in total. The test set contains 10 data trials for each $n$ and the validation set has 10 trials when $n=150$ (hyperparameters are picked based on $n=150$ and used for all $n$s).
For each $n$, we compared all three compositional clustering algorithms and all the baselines described above and then compared the resulting CRI  (Section \ref{sec:cri}) and  Adjusted Rand Index (ARI) scores.  For CAP, we used the full inference procedure for $n=150$, and we used CAP$\subset$ with a random subset of $150$ examples for $n>150$. All results of all clustering methods are averaged over 10 trials for each $n$.

To illustrate compositional clustering for $d=3$ (i.e., up to 3 speakers speaking simultaneously), we performed a second experiment using the same composition function $g$ as for $d=2$ (i.e., it does not need to be retrained for different $d$). There are 25 classes in total (5 singletons, 10 2-sets, and 10 3-sets). We adopted the same hyperparameters for each method that were optimized in the previous experiment for $d=2$. Due to the high computational cost, we varied $n$ only up to 2500, and we did not try CAP.

{\bf Hyperparameter optimization}:
CAP, CAP$\subset$, and AP have one hyperparameter, which is the cost $\gamma$ of creating a new singleton cluster.
GCR has two hyperparameters: the first is the number of clusters in the first step of clustering and the second  is the threshold $\tau$ to stop compositional label assignment.
CKM has four hyperparameters: the number of singleton clusters $k$; the number of random initializations; the maximum number of alternations; and the learning rate. 
AC  uses a distance threshold hyperparameter that determines whether to merge two clusters. 
GMM has one hyperparameter to decide the number of components. 
FCM has the number of clusters $c$ and the corresponding threshold $1/c$ on the vector of probabilities that determines when the model infers that an example belongs to a compositional cluster; it also has a temperature  $m$ that can make the estimated class probabilities more or less entropic.
The  hyperparameter sets were  decided separately for each method, based on pilot exploration, to give each method a good
chance of succeeding. The hyperparameter values were then optimized  so as to maximize the average (over 10 trials) CRI. Experiments were conducted using the Python code in the Github repository; the sklearn implementations of
\begin{tt}AgglomerativeClustering\end{tt} and  \begin{tt}AffinityPropagation\end{tt}; and the \begin{tt}SciKit-Fuzzy\end{tt} of FCM.

\begin{table}
\setlength\tabcolsep{3.5pt}
\begin{center}
%\begin{adjustbox}{width=1\columnwidth}
\begin{tabular}[b]{l|c|c|c|c|c}
\multicolumn{6}{c}{\bf LibriSpeech Results (2 Spkrs, CRI\%)} \\ \hline
$n$  & 150 & 750 & 1500 & 7500 & 15000\\ \hline
% GCR & 94.4 (1.0) & {\bf 95.7 (0.7)} & {\bf 95.7 (0.7)} & {\bf 96.2 (0.6)} & {\bf 96.2 (0.5)} \\
% CAP & 94.8 (1.4) &93.8 (1.4) &93.7 (1.5) &93.4 (1.3) &92.6 (2.1) \\
% CKM & {\bf 95.8 (0.8)} & 94.6 (1.0) & 94.9 (0.8) & 94.6 (0.8) & 94.3 (0.9)\\ \hline
% AP &87.9 (0.6) &87.0 (0.2) &86.1 (0.1) &85.0 (0.1) &84.8 (0.0) \\
% AC &88.4 (0.3) &86.4 (0.2) & 85.6 (0.1) & 84.7 (0.0)  &84.6 (0.0)\\
% FCM &88.0 (0.5) &88.3 (0.5) & 88.0 (0.4) &88.4 (0.4) & 88.4 (0.4) \\
% GMM & 88.3 (0.4) & 87.9 (0.6) & 86.6 (0.8) & 80.9 (1.4) & 79.9 (1.8) \\
GCR & 94.6 (1.1)  & 95.8 (0.7) & 96.0 (0.8)& {\bf96.5 (0.5)} &{\bf96.6 (0.5)}\\
CAP & 94.8 (1.4) &93.8 (1.4)& 93.7 (1.5)& 93.4 (1.3)&92.6 (2.1)\\
CKM & {\bf95.7 (0.8)} & {\bf96.3 (0.6)} &{\bf96.1 (0.6)} &96.2 (0.4) &96.1 (0.5)\\ \hline
AP & 87.9 (0.6) & 87.0 (0.2) & 86.1 (0.1)&85.0 (0.1)&84.8 (0.0)\\
AC & 88.4 (0.3) & 86.4 (0.2) & 85.6 (0.1)&84.7 (0.0)&84.6 (0.0)\\
FCM & 88.0 (0.5) &88.3 (0.5) & 88.1 (0.4) & 88.3 (0.4)&88.4 (0.4)\\
GMM & 87.8 (0.4) & 88.9 (0.4) & 88.7 (0.4) & 88.8 (0.3) & 88.6 (0.5)\\
{\em OSC} & {\em 91.1 (0.0)} & {\em 91.1 (0.0)} & {\em 91.1 (0.0)} & {\em 91.1 (0.0)} & {\em 91.1 (0.0)}
\end{tabular}
%\end{adjustbox}
% \end{center}
% \caption{LibriSpeech results (mean CRI\% and s.e.).}
% \label{tbl:librispeech_results}
% \end{table}
\\
% \begin{table}
% \setlength\tabcolsep{3.5pt}
% \begin{center}
%\begin{adjustbox}{width=1\columnwidth}
\begin{tabular}[b]{l|c|c|c|c|c}
\multicolumn{6}{c}{\bf LibriSpeech Results (2 Spkrs, ARI\%)} \\ \hline
$n$  & 150 & 750 & 1500 & 7500 & 15000\\ \hline
GCR &  77.2 (4.0) & {\bf82.1 (3.4)} & {\bf82.3 (3.7)} &{\bf85.1 (2.9)}&{\bf85.8 (2.6)}\\
CAP & 76.5 (5.5)  & 74.2 (4.9) &74.1 (5.4)&73.2 (4.8)&72.5 (6.2)\\
CKM & {\bf78.5 (3.7)}&81.8 (2.8)&81.4 (2.9)&83.4 (2.2)&83.1 (2.2)\\ \hline
AP &  72.5 (5.1) & 54.9 (2.9) & 39.6 (2.2)&15.3 (0.9)&9.2 (0.6)\\
AC &  74.2 (3.7)  & 43.1 (2.5) & 28.6 (1.8)&8.4 (0.6)&4.7 (0.3)\\
FCM &  70.0 (4.3)  & 70.5 (3.8) & 69.0 (3.8)&70.7 (4.0)&71.8 (3.7)\\
GMM & 70.5 (4.0)&80.3 (3.6)&78.6 (3.6)&79.0 (3.2)&78.8 (3.5)\\
\end{tabular}
%\end{adjustbox}
\end{center}
\caption{Results of LibriSpeech experiments (2 speakers) in CRI\%/ARI\% along with s.e.}
\label{tbl:librispeech_results}
\end{table}

\begin{table}
\setlength\tabcolsep{3.5pt}
\begin{center}
%\begin{adjustbox}{width=1\columnwidth}
\begin{tabular}[b]{l|c|c|c}
\multicolumn{4}{c}{\bf LibriSpeech Results (3 Spkrs, CRI\%)} \\ \hline
$n$  & 250 & 1250 & 2500\\ \hline
GCR & 87.4 (0.5)&88.6 (1.1)&89.6( 1.3)\\
CKM & {\bf95.1 (0.4)}&{\bf93.1 (0.7)}&{\bf94.7 (0.4)}\\ \hline
AP & 80.8 (0.2)&84.2 (0.4)&84.9 (0.2)\\
AC & 84.8 (0.1)&84.0 (0.1)&83.7 (0.0)\\
FCM & 84.0 (0.1)&83.8 (0.3)&84.4 (0.2)\\
GMM & 84.3 (0.2)&84.5 (0.4)&85.0 (0.2)\\
\emph{OSC} & \emph{87.1 (0.0)}&\emph{87.2 (0.0)}&\emph{87.2 (0.0)} \\
% {\em OSC} & {\em 91.1 (0.0)} & {\em 91.1 (0.0)} & {\em 91.1 (0.0)} 
\end{tabular}
%\end{adjustbox}
% \end{center}
% \caption{LibriSpeech results (mean CRI\% and s.e.).}
% \label{tbl:librispeech_results}
% \end{table}

% \begin{table}
% \setlength\tabcolsep{3.5pt}
% \begin{center}
%\begin{adjustbox}%{width=1\columnwidth}
\begin{tabular}[b]{l|c|c|c}
\multicolumn{4}{c}{\bf LibriSpeech Results (3 Spkrs, ARI\%)} \\ \hline
$n$  & 250 & 1250 & 2500\\ \hline
GCR &  55.7 (2.0)&59.0 (4.6)&69.3 (3.0)\\ 
CKM & {\bf72.6 (2.0)}&{\bf64.1 (3.1)}&{\bf71.6 (1.8)}\\ \hline
AP &  32.8 (1.5)&57.9 (4.6)&62.7 (2.6)\\ 
AC &  54.2 (2.7)&32.9 (2.1)&22.6 (0.7)\\ 
FCM &  51.2 (2.0)&50.4 (3.6)&57.0 (2.4)\\ 
GMM & 53.3 (3.3)&60.4 (4.9)&67.8 (3.1) 
\end{tabular}
%\end{adjustbox}
\end{center}
\caption{Results of LibriSpeech experiments (3 speakers) in CRI\%/ARI\% along with s.e.}
\label{tbl:librispeech_results3}
\end{table}

{\bf Results}: The mean CRI\% and ARI\% (along with standard error) for $d=2$ are shown in Table~\ref{tbl:librispeech_results}. For all values of $n$, all the compositional clustering algorithms (the first three lines of the table) worked better than all of the standard clustering methods in terms of CRI; the differences (as assessed with matched-pair t-tests between methods over the 10 test trials, at the 0.05 significance level) were all stat.~sig. 
GCR and CKM usually gave the highest accuracy. CAP comes in second place for $n=150$ but as $n$ increases, its accuracy decreases; this is likely because, for the larger $n$ values, CAP$\subset$ sees a relatively smaller fraction of the total dataset as $n$ increases. All the proposed methods  outperformed the Oracle Singleton Clustering baseline, suggesting that they can both form coherent clusters and correctly infer the compositional relationships between them. Among the traditional  methods, either FCM or GMM usually performed best: while it does have some ability to infer compositionality via the probability vector assigned to each example, it does not  use  the embedding model's composition function $g$; hence, it must rely  on compositional clusters lying close to their constituent singleton clusters in the embedding space, which does not always happen in practice.

In terms of ARI -- which can measure the purity of inferred clusters but not the accuracy of the inferred compositional relationships -- both GCR and CKM always outperformed the best standard clustering method (though the differences were not always stat.~sig.). This suggests that the $g$ function enabled the compositional methods to obtain purer clusters. CAP outperformed all the standard clustering methods for $n=150$ but not for larger $n$ (when CAP$\subset$ was used). Note that some ARI scores, particularly for the AC and AP methods, were very low for larger $n$; this is likely because the hyperparameters for all methods were optimized for CRI, not ARI, and because ARI adjusts for the accuracy obtained by just guessing. Manual inspection of  the results suggests that some methods (e.g., AC for $n=15000$) incorrectly deduced a very large number of clusters, which was heavily penalized by the ARI metric.

For $d=3$  (Table \ref{tbl:librispeech_results3}), the trends were mostly similar to $d=2$. The accuracy differences  with the best-performing standard clustering method were stat.~sig.~for both GCR and CKM for CRI for all $n$; for ARI, the significance tests were mixed. 

\begin{figure*}
\begin{center}
\includegraphics[width=\textwidth]{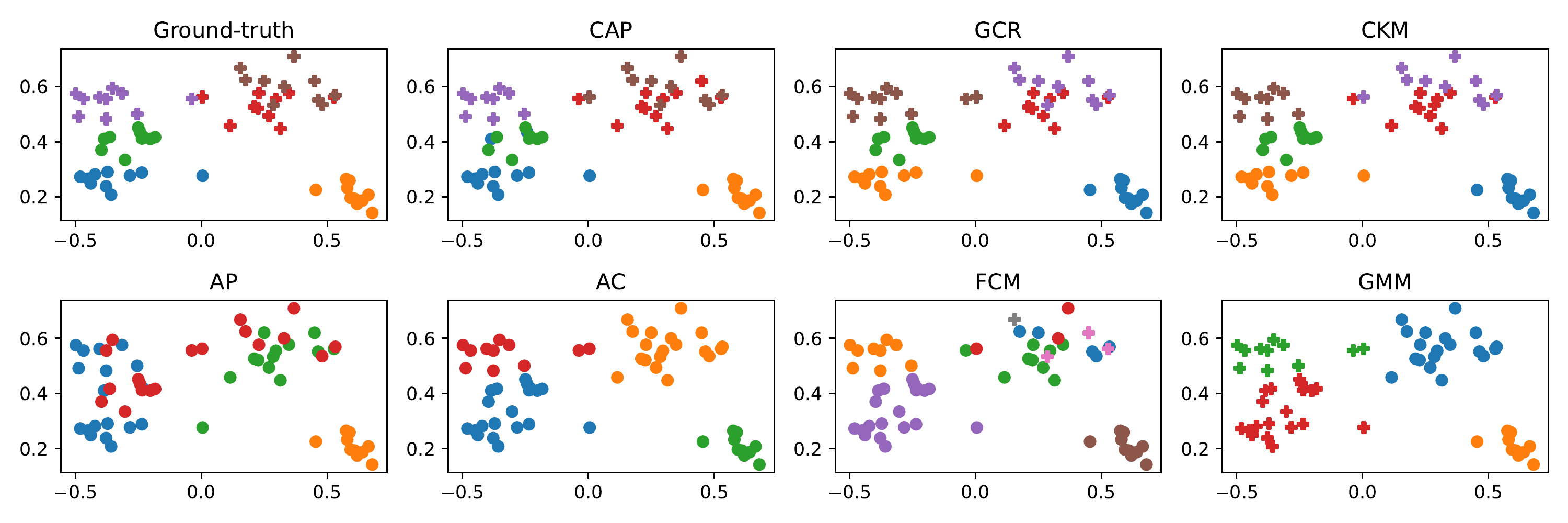}
\end{center}
\caption{Some LibriSpeech clusters (for $d=2$), according to different methods. Circles belong to singleton clusters; plusses belong to compositional clusters. Best viewed in color.}
\label{fig:librispeech_clusters}
\end{figure*}

{\bf Inferred Clusters}:
Figure \ref{fig:librispeech_clusters} shows some clustering results of the 4 different methods; in each plot (generated
using PCA applied to $\mathcal{X}$), circles and plusses represent examples
from singleton and compositional clusters, respectively.
(To avoid clutter, we show just 3 singleton clusters and their compositions, and the inferred relationships of which clusters are composed to yield other clusters are not shown.)
All three compositional clustering methods are largely successful in inferring both the clusters and
their compositionality. AP and AC %, since they have no ability to infer compositionality,
can approximately infer the clusters but sometimes lump groups of examples together
that actually come from distinct clusters. FCM  and GMM do manage to infer some compositionality correctly, but not as well
as the compositional methods.

\subsection{Experiment II: OmniGlot}
\begin{figure}
\begin{center}
	\begin{tabular}[b]{cccccc}
		\includegraphics[width=.3in]{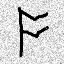} &
		\includegraphics[width=.3in]{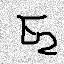} &
		\includegraphics[width=.3in]{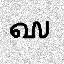} &
		\includegraphics[width=.3in]{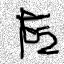} &
		\includegraphics[width=.3in]{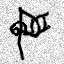} &
		\includegraphics[width=.3in]{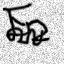} \\
		\includegraphics[width=.3in]{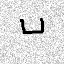} &
		\includegraphics[width=.3in]{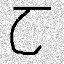} &
		\includegraphics[width=.3in]{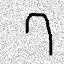} &
		\includegraphics[width=.3in]{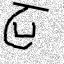} &
		\includegraphics[width=.3in]{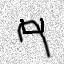} &
		\includegraphics[width=.3in]{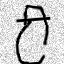} \\
	\end{tabular}
\end{center}
\caption{Some representative examples from OmniGlot images: the first three columns show examples from singleton clusters, whereas the latter show images from compositional clusters.}
\label{fig:omniglot_examples}
\end{figure}
Here we considered a multi-object image recognition problem using the OmniGlot \cite{lake2015human} dataset.
OmniGlot  contains images of handwritten symbols from many languages. It has 1623 different handwritten characters from 50 different alphabets. We can synthesize images with multiple symbols by element-wise superposition;
Figure \ref{fig:omniglot_examples} shows 4 groups of examples used in the experiment, where each group contains images with cluster labels $\{1\},\{2\},\{3\},\{1,2\},\{1,3\},\{2,3\}$. Because of the intertangling of the different symbols in each compositional image, the recognition problem is quite challenging.
We  created 10 trials for each $n$ in the test set and 10 trials when $n=150$ in the validation set.

{\bf Embedding model}:
In our experiment, all characters are augmented with random scaling and shifting, and random Gaussian noise is added to the background. We pre-trained a compositional embedding model $f^\textrm{emb}$ using
a ResNet18 \cite{he2016deep} network; the composition function $g$ is the same as for LibriSpeech (see Appendices).

{\bf Procedures} are analogous to the experiments on LibriSpeech (see Section \ref{sec:libri}).
%In this experiment, clustering methods are compared in the same settings as in section \ref{sec:libri}.
\begin{table}
\setlength\tabcolsep{3.5pt}
\begin{center}
%\begin{adjustbox}{width=1\columnwidth}
\begin{tabular}[b]{l|c|c|c|c|c}
\multicolumn{6}{c}{\bf OmniGlot Results (CRI\%)} \\ \hline
$n$ & 150 & 750 & 1500 & 7500 & 15000 \\ \hline
% GCR &  94.9 (0.4) & 95.9 (0.3) & 96.0 (0.2) & 96.3 (0.3) & 96.3 (0.3) \\
% CAP &93.3 (0.4) &92.8 (0.5) &92.6 (0.5) &92.7 (0.4) & 92.8 (0.2) \\
% CKM & {\bf 96.9 (0.8)} & {\bf 97.6 (0.2)} & {\bf 97.6 (0.2)} & {\bf 97.5 (0.2)} & {\bf 97.5 (0.2)}\\ \hline
% AP &87.9 (0.1) &  86.3 (0.1) &85.6 (0.1) & 84.8 (0.0) & 84.7 (0.0) \\
% AC &87.8 (0.1) &85.7 (0.1) &85.2 (0.0) &84.7 (0.0) & 84.6 (0.0)  \\
% FCM &88.2 (0.2) &88.2 (0.1) &87.9 (0.2) &88.0 (0.2) & 88.0 (0.1) \\
% GMM & 87.9 (0.2) & 88.9 (0.2) & 87.8 (0.4) & 83.7 (0.1) & 78.6 (1.7) \\
GCR &  {\bf94.9 (0.4)}&95.9 (0.3)&96.0 (0.2)& 96.3 (0.3)&96.3 (0.3)\\ 
CAP &93.3 (0.4)&92.8 (0.5)&92.6 (0.5)&92.7 (0.5)& 92.8 (0.2)\\ 
CKM & 94.3 (1.2)&{\bf97.1 (0.4)}&{\bf96.7 (0.5)}&{\bf96.9 (0.4)}&{\bf96.9 (0.5)}\\ \hline
AP &  87.9 (0.1)& 86.3 (0.1)&85.6 (0.1)&84.8 (0.0)& 84.7 (0.0)\\ 
AC &  87.9 (0.1)&85.7 (0.1)&85.2 (0.0)&84.7 (0.0) & 84.6 (0.0)\\ 
FCM &  88.1 (0.1)&88.0 (0.1)&87.9 (0.2)&88.0 (0.1)&87.9 (0.3)\\ 
GMM &85.7 (0.4)&86.4 (0.5)&86.0 (0.9)&87.9 (0.3)&87.1 (0.7) \\
{\em OSC} & {\em 91.1 (0.0)} & {\em 91.1 (0.0)} & {\em 91.1 (0.0)} & {\em 91.1 (0.0)} & {\em 91.1 (0.0)}
\end{tabular}
%\end{adjustbox}
%\end{center}
%\includegraphics[width=0.5\columnwidth]{omniglot_montage}
% \caption{OmniGlot results (mean CRI\% and s.e.).}
% \label{tbl:omniglot_results}
% \end{table}

% \begin{table}
% \setlength\tabcolsep{3.5pt}
% \begin{center}
%\begin{adjustbox}{width=1\columnwidth}
\begin{tabular}[b]{l|c|c|c|c|c}
\multicolumn{6}{c}{\bf OmniGlot Results (ARI\%)} \\ \hline
$n$ & 150 & 750 & 1500 & 7500 & 15000 \\ \hline
GCR &  76.0 (1.3)&81.0 (1.2)&81.4 (1.0)&84.6 (0.9)&84.9 (0.9)\\ 
CAP &63.1 (2.2)&64.9 (2.5)&64.3 (2.2)&64.7 (1.9)&66.1 (0.9)\\ 
CKM & {\bf77.7 (4.1)}&{\bf86.1 (1.3)}&{\bf85.5 (1.5)}&{\bf85.7 (1.3)}&{\bf85.8 (1.6)}\\ \hline
AP &  69.8 (1.3)& 43.8 (1.7)&29.3 (1.3)&9.3 (0.4)& 6.3 (0.1)\\ 
AC &  65.4 (1.8)&29.1 (1.6)&18.1 (0.9)&5.8 (0.2)& 4.4 (0.1)\\ 
FCM &  69.7 (1.1)&67.1 (1.0)&66.6 (1.5)&67.1 (1.4)&67.7 (1.5)\\
GMM & 55.1 (2.2)&63.1 (2.7)&65.5 (4.4)&75.0 (2.3)&70.0 (3.3)\\
\end{tabular}
%\end{adjustbox}
\end{center}
\caption{Results of OmniGlot experiments in CRI\%/ARI\% along with s.e.}
\label{tbl:omniglot_results}
\end{table}

{\bf Results}  are in Table~\ref{tbl:omniglot_results}. CKM  gave the highest accuracy for almost all values of $n$. With CKM (both on LibriSpeech and OmniGlot), we found that trying different random initializations of the singleton cluster centroids, and then choosing the final clustering based on the sum of squared distances after training, was important to get good performance. Nonetheless, CKM's accuracy was not just due to randomly ``guessing'' which of the clusters were singleton clusters -- the number of random seeds in our experiments (we used 100) was far smaller than the total number of possible choices of 5 singleton clusters out of 15 total clusters (${15 \choose 5} = 3003$), suggesting that CKM uses $g$ and the numerical SSD-minimization procedure to deduce compositional structure.
After CKM, 
GCR was usually second best, followed by CAP (which was sometimes slightly worse than the best standard clustering methods, in terms of ARI). Among the traditional clustering methods, 
FCM usually performed best. The CRI accuracy improvement compared to the best standard clustering method was stat.~sig.~for all the compositional clustering algorithms. For ARI, the results of the t-tests were mixed.

\section{Conclusions}
We presented three new algorithms (CAP, CKM,and GCR) 
that can both cluster data and infer the compositional relationships between clusters. These algorithms  can facilitate data visualization and exploratory data analyses on datasets where the classes have not been previously seen. Our experiments  
on the LibriSpeech and OmniGlot datasets suggest that modeling compositionality
explicitly is useful and enables the proposed methods to identify coherent and distinctive clusters, and also to infer the compositional relationships between them. The proposed methods deliver substantially higher accuracy than can be achieved with standard  methods (e.g., GMM, FCM), even when the latter have the ability to assign examples ``softly'' to multiple clusters. Among CKM, GCR, and CAP, we found that CKM and GCR gave higher accuracy and also scale better with dataset size $n$.

{\bf Limitations of proposed methods}: In practice, training  embedding function
$f^\textrm{emb}$ jointly with
composition
function $g$  can be challenging, especially when the compositional degree $d\geq 3$. On the other hand, as few-shot learning is an active research  field, more powerful embedding approaches could arise that make this challenge less severe.

{\bf Future work}: We anticipate that, as multi-label few-shot learning research continues to grow, there will be increasing interest for methods to cluster data from unseen classes. Research on more accurate compositional embedding models, especially to the extent that the composition function $g$ can be increased in accuracy, will likely lead to accuracy improvements in compositional clustering methods. One possible downstream application of our work is a simplified pipeline for speaker diarization: instead of separate algorithms to detect overlapping speech, separate speech segments into long vs.~short turns, and then cluster the utterances \cite{bullock2020overlap}, it may be possible to apply a compositional clustering algorithm that can diarize the set of all speech utterances in just one pass.

{\bf Acknowledgement}: This research was supported by the NSF National AI Institute
for Student-AI Teaming (iSAT) under grant DRL \#2019805. The opinions expressed
are those of the authors and do not represent views of the NSF. The research was also supported by NSF awards \#2046505 and \#1822768.

\bibliography{paper}
\bibliographystyle{elsarticle-harv}

%%%%%%%%%%%%%%%%%%%%%%%%%%%%%%%%%%%%%%%%%%%%%%%%%%%%%%%%%%%%%%%%%%%%%%%%%%%%%%%
%%%%%%%%%%%%%%%%%%%%%%%%%%%%%%%%%%%%%%%%%%%%%%%%%%%%%%%%%%%%%%%%%%%%%%%%%%%%%%%
% APPENDIX
%%%%%%%%%%%%%%%%%%%%%%%%%%%%%%%%%%%%%%%%%%%%%%%%%%%%%%%%%%%%%%%%%%%%%%%%%%%%%%%
%%%%%%%%%%%%%%%%%%%%%%%%%%%%%%%%%%%%%%%%%%%%%%%%%%%%%%%%%%%%%%%%%%%%%%%%%%%%%%%
\newpage
\appendix

%For the MultiMNIST experiments, run \begin{tt}mnist\_experiments.py\end{tt} and  save the output. Then run \begin{tt}collate\_all\_mnist\_results.sh\end{tt} to extract the relevant results. Finally, run \begin{tt}analyze\_mnist\_results.py\end{tt}.

%For the OmniGlot and LibriSpeech experiments, run the \begin{tt}run.sh\end{tt} commands.

\section{Loopy Belief Propagation for Standard Affinity Propagation}
When applying the max-product algorithm 
to the factor graph in standard Affinity Propagation,
a sequence of ``messages'' (functions $\alpha, \rho: [n]\times[n]\times \mathcal{C} \rightarrow \mathbb{R}_{\leq 0} \cup \{ -\infty \}$) is passed back and forth between the variable and factor nodes.
Each variable $i$ sends a message $\R{i}{k}{c_i}$ to constraint $k$, and
each constraint $k$ sends a message $\A{i}{k}{c_i}$ to variable $i$, about the likelihood of each possible value of $c_i$.
The max-sum algorithm (and the related max-product algorithm for factor graphs) 
dictates that $\R{i}{k}{c_i}$ equals the sum of messages over $c_i$'s neighbors \emph{except} $\delta_k$ (i.e.,
$\{ \delta_{k'} \}_{k' \ne k}$):
\begin{equation}
\R{i}{k}{c_i} = S(i,c_i) + \sum_{k' \ne k} \A{i}{k'}{c_i}
\end{equation}
Also, for MAP estimation, $\A{i}{k}{c_i}$ equals the maximum possible sum of the messages from all of $\delta_k$'s neighbors \emph{except} $i$ (i.e., $\{ c_{i'} \}_{i' \ne i}$), plus the value of $\delta_k$ itself:
\begin{equation}
\A{i}{k}{c_i} = \max_{\{ c_{i'} \}_{i' \ne i}} \left[
\delta_k(c_1,\ldots,c_n) + \sum_{i'\ne i} \R{i'}{k}{c_{i'}} \right]
\end{equation}

\section{Derivation of Algorithm 1 (CAP) and Proof of Theorem 1}
Here we derive Algorithm 1 from the definitions of $\alpha$ and $\rho$  to optimize the Compositional Affinity Propagation model. We also prove the time cost in Theorem 1.
Recall the definitions of $\alpha$, $\rho$, and $\delta$:
\begin{eqnarray}
\R{i}{k}{c_i} &=& S(i,c_i) + \sum_{k' \ne k} \A{i}{k'}{c_i} \label{eqn:rho} \\
\A{i}{k}{c_i} &=& \max_{\{ c_{i'} \}_{i' \ne i}} \left[
\delta_k(c_1,\ldots,c_n) + \sum_{i'\ne i} \R{i'}{k}{c_{i'}} \right] \label{eqn:alpha} \\
\delta_k(c_1,\ldots,c_n)&=& \left\{ \begin{array}{cl}- \infty & \textrm{if}\ \exists i: (c_i\ni k) \wedge (c_k\ne \{k\}) \\
                                                     0 & \textrm{otherwise}
                                   \end{array} \right.
\end{eqnarray}
Each message $\A{i}{k}{c_i}$ computes, for a given value of $c_i$ for variable $i$, an (unnormalized) log-likelihood of the
\emph{best possible configuration} of the assignments of all the \emph{other} variables $\{ c_{i' \ne i} \ \}$,
given that constraint $k$ is satisfied (i.e., $\delta_k$ is finite).
There are four cases in which this occurs; they mirror those in standard Affinity Propagation but differ slightly.
For each case, the $\delta$ term in the RHS of Eqn.~\ref{eqn:alpha} vanishes; the only remaining terms are the sum of the $\rho$'s.  Also, since each summand in Eqn.~\ref{eqn:alpha} depends on just a single unique $c_{i'}$, the max of the sum becomes the sum of the max. Cases:
\begin{enumerate}
\item $i=k$, $c_i=\{k\}$: Since in this case example $i=k$ designates itself as an exemplar, 
then the constraint $\delta_k$ is immediately satisfied.
Moreover, any of the other examples $i'\ne i$ is free to choose (or not choose) example $k$ as an exemplar, and
therefore we can take the maximum over \emph{any} possible value for each $c_{i'}$.
Hence, \begin{eqnarray*}
\A{i}{k}{c_i} &=& %\max_{\{ c_{i'} \}_{i' \ne k}} \left[ \delta_k(c_1,\ldots,c_n) + \sum_{i'\ne k} \R{i'}{k}{c_{i'}} \right] \\
%&=&
\max_{\{ c_{i'} \}_{i' \ne k}} \left[ 0 + \sum_{i'\ne k} \R{i'}{k}{c_{i'}} \right]
=\sum_{i' \ne k} \max_{c_{i'}} \R{i'}{k}{c_{i'}}\end{eqnarray*}
\item $i=k, c_i\not \ni k$: Since example $i=k$ does not designate itself as an exemplar, then none of the other examples
$i'\ne i$ may choose $k$ as its exemplar. Hence, $\A{i}{k}{c_i} = \sum_{i' \ne k} \max_{c_{i'} \not \ni k} \R{i'}{k}{c_{i'}}$.
\item $i\ne k, c_i\ni k$: Since example $i$ designates its exemplar either to be or to include example $k$, then $\alpha$ is finite only if $c_k=\{k\}$, and each remaining example $i'\not\in \{i,k\}$ is free to designate any example as its exemplar. Hence,
	$\A{i}{k}{c_i} = \R{k}{k}{\{k\}} + \sum_{i' \not\in \{i,k\}} \max_{c_{i'}} \R{i'}{k}{c_{i'}}$.
\item $i\ne k,c_k \not \ni k$: Since example $i$ does not designate $k$ as an exemplar, then 
example $k$ can either  be an exemplar or not, and we  take the max over both
possibilities:
\begin{eqnarray*}
\A{i}{k}{c_i}
 &=& \max\left[\max_{c_k \not \ni k} \R{k}{k}{c_k} + \sum_{i' \not \in \{i,k\}} \max_{c_{i'} \not \ni k} \R{i'}{k}{c_{i'}},\quad
\R{k}{k}{\{k\}} + \sum_{i' \not \in \{i,k\}} \max_{c_{i'}} \R{i'}{k}{c_i} \right]
\end{eqnarray*}
\end{enumerate}
Note that $\A{i}{k}{c_i}=-\infty$ if $i=k$, $c_i \ni i$ and $c_i \ne \{ i \}$. However, in practice we can avoid this case
by instead setting  $S(i,c_i)=-\infty$ whenever $c_i \ni i$ and $c_i \ne \{ i \}$.
Given the four cases above, we have the following definition of $\alpha$:
\begin{eqnarray}
\A{i}{k}{c_i} &=& \max_{\{ c_{i'} \}_{i'\ne i}} \left[
  \delta_k(c_1,\ldots, c_n) + \sum_{i' \ne i} \R{i'}{k}{c_{i'}}
\right] \\
&=& \left\{
	\begin{array}{lc}
	\sum_{i' \ne k} \max_{c_{i'}} \R{i'}{k}{c_{i'}} & i=k, c_i=\{k\} \\
	\sum_{i' \ne k} \max_{c_{i'} \not \ni k} \R{i'}{k}{c_{i'}} & i=k, c_i \not \ni k\\
	\R{k}{k}{k} + \sum_{i' \not \in \{i,k\}} \max_{c_{i'}} \R{i'}{k}{c_{i'}} & i \ne k, c_i\ni k\\
	\max\left[\max_{c_k \not \ni k} \R{k}{k}{c_k} + \sum_{i' \not \in \{i,k\}} \max_{c_{i'} \not \ni k} \R{i'}{k}{c_{i'}},\right. &\\
	     \quad \qquad \left. \R{k}{k}{k} + \sum_{i' \not \in \{i, k \}} \max_{c_{i'}} \R{i'}{k}{c_{i'}} \right] & i \ne k, c_k \not \ni k\\
	\end{array}
\right. \label{eqn:alpha4cases}
\end{eqnarray}
%Note that, in practice, we can dispense with the third case ($-\infty$) by setting  $S(i,c_i)=-\infty$ whenever $c_i \ni i$ and $c_i \ne \{ i \}$.

In the most naive implementation, evaluating $\alpha$ for \emph{each} tuple $(i,k,c_i)$ would take time
$O(n^2)$ due to the summing over the max; the entire table of $\alpha$ values
would thus take time $O(n^4\times |\mathcal{C}|)$. However, there is massive redundancy that can be avoided:
First, for each tuple $(i,k)$, only two possible values of $\A{i}{k}{c_i}$ exist: one for $c_i \ni k$ 
(i.e., $\A{i}{k}{\phi(k)}$) and one for $c_i \not \ni k$ (i.e., $\A{i}{k}{\overline{\phi}(k)}$). 
Hence, instead of computing $|\mathcal{C}|$ values for each tuple $(i,k)$, we need to compute and store only 2 values.
Second, the expressions $\sum_{i' \ne k} \max_{c_{i'}} \R{i'}{k}{c_{i'}}$ 
and $\sum_{i' \ne k} \max_{c_{i'} \not \ni k} \R{i'}{k}{c_{i'}}$
depend on $k$ but not on $i$; hence, they can be reused for many tuples $(i,k)$. Third:
\begin{eqnarray*}
\sum_{i' \not \in \{i,k\}} \max_{c_{i'}} \R{i'}{k}{c_{i'}} &=& \sum_{i' \ne k} \max_{c_{i'}} \R{i'}{k}{c_{i'}}\ - \max_{c_i} \R{i}{k}{c_i} \\
\sum_{i' \not \in \{i,k\}} \max_{c_{i'} \not \ni k} \R{i'}{k}{c_{i'}} &=& \sum_{i' \ne k} \max_{c_{i'} \not \ni k} \R{i'}{k}{c_{i'}}\ -
\max_{c_i \not \ni k} \R{i}{k}{c_{i}}
\end{eqnarray*}
Hence, after computing each of the terms of the LHS above (just once for each $k$),
we need only to ``adjust'' them for each $i$, in $O(1)$ time, by subtracting
the corresponding term on the RHS.
At the end of all the CAP iterations, we  set
$c_i^\textrm{MAP} = \argmax_{c_i} \left[ \sum_k \A{i}{k}{c_i} + S(i,c_i) \right]$.
Hence, as long as we can update $\alpha$ during each iteration of message passing, then
we never need to know $\rho$ explicitly.

For convenience, define the following functions:
\begin{eqnarray*}
b(i,k) &=& \max_{c_i} \R{i}{k}{c_i} \\
\overline{b}(i,k) &=& \max_{c_i \not \ni k} \R{i}{k}{c_i} \\
e(k) &=& \sum_{i' \ne k} \max_{c_{i'}} \R{i'}{k}{c_{i'}} = \sum_{i' \ne k} b(i',k)\\
\overline{e}(k) &=& \sum_{i' \ne k} \max_{c_{i'} \not \ni k} \R{i'}{k}{c_{i'}} = \sum_{i' \ne k} \overline{b}(i',k)\\
h(k) &=& \R{k}{k}{k}\\
a(i,k)&=&\A{i}{k}{\phi(k)} = \left\{\begin{array}{cc}e(k) & i=k \\ h(k) + e(k)-b(i,k) & i\ne k\end{array}\right. \\
\overline{a}(i,k)&=&\A{i}{k}{\overline{\phi}(k)} = \left\{\begin{array}{cc}
    \overline{e}(k) & i=k  \\
    \max(\overline{b}(k,k) + \overline{e}(k) - \overline{b}(i,k), h(k) + e(k) - b(i,k)) & i\ne k
  \end{array}\right.\\
\end{eqnarray*}
Visual inspection of Equation~\ref{eqn:alpha4cases} confirms that the $a(i,k)$ and $\overline{a}(i,k)$
defined above recover all $2n^2$ degrees of freedom of $\alpha$.
Below we show how we can compute $e$,$\overline{e}$,$b$, $\overline{b}$, and $h$
in a \emph{total} time of $O(dn^{d+1})$ per iteration. First, however, we need to derive the computation
of some intermediate quantities.

%Next, notice that the $\Gamma_k,\Gamma_k^{\not \ni k},\Gamma_{i,k}^{\not \ni k},\Gamma_{i,k},\beta_k$
%variables need to be computed only once.
%Next, notice that the $\Gamma_{i,k}$ and
%$\Gamma_{i,k}^{\not \ni k}$ can be computed, for each $i$, in time $O(n)$ after their respective counterparts
%$\Gamma_k$ and $\Gamma_k^{\not \ni k}$ have been computed.
%Next, suppose that each $\Gamma_k$ and $\beta_k$ could somehow be computed in time $O(|\mathcal{C}|)$ (instead
%f $O(n \times |\mathcal{C}|)$ with a naive implementation).
%
%Notice that the $\Gamma_k, \Gamma_k{\not \ni k}$ and $\beta_k$ depend only on $k$. Moreover,

\subsection{Computing $q(i,c_i) = \sum_{k'} \A{i}{k'}{c_i}\ \forall i,c_i$}
\label{sec:q}
Define $q(i,c_i) = \sum_{k'} \A{i}{k'}{c_i}$. For each $i$, we can compute $q(i,c_i)$ for each $c_i$ 
by splitting the sum over $k'$ into two parts: those $k'$ such that $\phi(k') \ni c_i$ and those $k'$
such that $\overline{\phi}(k') \ni c_i$. We then substitute $\A{i}{k'}{c_i} = \A{i}{k'}{\phi(k')}$
for $k'$ s.t.~$\phi(k') \ni c_i$ (and similarly for $\overline{\phi}(k')$) to yield:
\begin{eqnarray*}
q(i,c_i)
&=& \sum_{k': \phi(k') \ni c_i} \A{i}{k'}{\phi(k')} + \sum_{k': \overline{\phi}(k') \ni c_i} \A{i}{k'}{\overline{\phi}(k')} \\
&=& \sum_{k'} \A{i}{k'}{\overline{\phi}(k')}
  + \sum_{k': \phi(k') \ni c_i} (\A{i}{k'}{\phi(k')} - \A{i}{k'}{\overline{\phi}(k')}) \\
&=& \sum_{k'} \A{i}{k'}{\overline{\phi}(k')}
  + \sum_{k' \in c_i} (\A{i}{k'}{\phi(k')} - \A{i}{k'}{\overline{\phi}(k')}) \\
\end{eqnarray*}
We can define ${q}^*(i)=\sum_{k'} \A{i}{k'}{\overline{\phi}(k')}$. Then we have $q(i,c_i) = {q}^*(i) + \sum_{k' \in c_i} (a(i,k') - \overline{a}(i,k'))$.
The term ${q}^*(i)$ takes time $O(n)$ for each $i$ but is reused for all $c_i$.
The summation on the RHS contains at most $d$ terms
(for a maximum composition size of $d$). Hence, for each $i$, the total computation
(over all $c_i$) is $O(n + |\mathcal{C}|d) = O(n + dn^d) = O(dn^d)$.

\subsection{Efficiently Finding Maxima of Many Subsets}
\label{sec:find_all_maxes}
The next step we need is an efficient method to compute expressions of the forms (a)
$\max_{c_i \in \phi(k)} q(i,c_i)$ and (b)
$\max_{c_i \in \overline{\phi}(k)} q(i,c_i)$ for all $k$, in a 
total time of $O(n^{d+1})$.

Form (a): 
Since each such $c_i$
must contain $k$, then there are only $d-1$ remaining degrees of freedom for each $\phi(k)$; hence,
$|\phi(k)|\leq n^{d-1}$ for each $k$, and directly computing the maximum of $q(i,\cdot)$ over every
$\phi(k)$ takes a total time of $O(n^d)$ (summed over all $k$).

Form (b):
Define $\overline{\phi}^j(k) = \{ c \in \overline{\phi}(k): |c| = j \}$.
Since $\max_{c_i \in \overline{\phi}(k)} q(i,c_i) = \max_{j\in [d]} \max_{c_i \in \overline{\phi}^j(k)} q(i,c_i)$,
we can split the task into subtasks by $j$ and then take the max over all of them.
To compute the max over each $\overline{\phi}^j(k)$, we can iterate over all $n^{j-1}$ tuples
$(t_1,\ldots,t_{j-1})\in [n]^{j-1}$; for each tuple, we can compute in $O(n)$ time the largest and second-largest value of $q(i,\cdot)$ over the set $(t-1,\ldots,t_{j-1},t_j)$ and then ``adjust'' the result in constant time to obtain the  update for each $k$. In particular,
for each such tuple $\tau$, let 
$\psi_\tau = \{ \{ t_1, \ldots, t_j \} \in \mathcal{C}: t_1< \ldots < t_j \}$. (For instance, if $j=2$, $n=4$, 
$\tau=(1,2)$, and $\mathcal{C}$ contains all 3-tuples, then $\psi_\tau=\{ \{1,2,3\}, \{1,2,4\} \}$.)
In each iteration, let $c^1,c^2 \in \psi_\tau$ be the arguments corresponding  to the largest
and second-largest elements in $q(i,\psi_\tau)$;
if $|\psi_\tau|=1$, then define $c^2=\emptyset$; if $|\psi_\tau|=0$, then define both $c^1=c^2=\emptyset$.
(Note that $\emptyset \not \in \overline{\phi}(k)$ for any $k$.)
For any $k$, it must be the case that the number of elements in the set
$\psi_\tau \cap \overline{\phi}^j(k)$ is either $0$ (if any $k\in \{t_1,\ldots, t_{j-1} \}$),
$|\psi_\tau|-1$ (if $k>t_{j-1}$, such that we must ignore exactly one element of $\psi_\tau$ for each $k$), or $|\psi_\tau|$ (if $k \not \in \{t_1,\ldots, t_{j-1}\}$ and $k < t_{j-1}$).
In the first case (intersection is empty), we make no update to $\max_{c_i \in \overline{\phi}^j(k)} q(i,c_i)$.
In the second (intersection is of size $|\psi_\tau|-1$), we update $\max_{c_i \in \overline{\phi}^j(k)} q(i,c_i)$ with
$q(i,c^1)$ if $c^1 \not \ni k$ and with $q(i,c^2)$ otherwise. And in the third (intersection is of size $|\psi_\tau|$), we always update 
$\max_{c_i \in \overline{\phi}^j(k)} q(i,c_i)$ with $q(i,c^1)$. Since $c^1,c^2$ can be computed in time
$|\psi_\tau| \leq n$ and then reused for \emph{each} of the $k$ (in constant-time) for the updates, and since there
are at most $n^{j-1}$ such tuples $\tau$ when scanning the entire $\mathcal{C}$, then this amounts to
a total time of $O(dn^j)$ for each $j$. Summing over all $j=1,\ldots,d$, this yields a running time
of $O(dn^d)$. See Algorithm \ref{alg:findmaxes}. The $\argmax^{1,2}$ function returns the $c^1,c^2$
that give the
largest and second-largest values of the specified function, where $c^2=\emptyset$ if the input set is of size 1,
and $c^1=c^2=\emptyset$ if the input set is empty.

\subsection{Computing Maxes of Sums Except Row $k$}
%Note that the value of $\alpha$ depends on sums of maxes of the forms: (a)
%$\sum_{i' \ne k} \max_{c_{i'}} \R{i'}{k}{c_{i'}}$, (b)
%$\sum_{i' \ne k} \max_{c_{i'} \not \ni k} \R{i'}{k}{c_{i'}}$, (c)
%$\sum_{i' \not \in \{i,k\}} \max_{c_{i'}} \R{i'}{k}{c_{i'}}$, and (d)
%$\sum_{i' \not \in \{ i,k \}} \max_{c_{i'} \not \ni k} \R{i'}{k}{c_{i'}}$.
%Forms (a) and (b) depend only on $k$, not $i$, and hence can be reused across all $i$.
%Forms (c) and (d) can be adjusted, for each particular value of $i$, in constant time, by
%subtracting $\max_{c_{i}} \R{i}{k}{c_i}$ and
%$\max_{c_i \not \ni k} \R{i}{k}{c_i}$ from forms (a) and (b), respectively.
We can now show how
expressions of the form $b(i',k) = \max_{c_{i'}} \R{i}{k}{c_{i'}}$ and $\overline{b}(i',k) = \max_{c_{i'} \not \ni k} \R{i}{k}{c_{i'}}$ can be computed
efficiently. We first examine the former, which by definition is:
\[
\max_{c_{i'}} \R{i'}{k}{c_{i'}} = \max_{c_{i'}} \left[ S(i', c_{i'}) + \sum_{k' \ne k} \A{i'}{k'}{c_{i'}} \right]
\]
In other words, we need to find the $c_{i'}$ that maximizes $S(i', c_{i'})$ plus
the sum (except the $k$th term) of the $\A{i}{k'}{c_{i'}}$ (see Figure \ref{fig:maxSumE}).
%We require that the total computation (summed over all $i',k$) be $O(n\times |\mathcal{C}|)$.
As mentioned above, for each $i',k$, function $\A{i'}{k}{\cdot}$ has only 2 degrees of freedom:
one for $c_{i'} \in \phi(k)$ (the blue regions in Figure \ref{fig:maxSumE})
and one for $c_{i'} \in \overline{\phi}(k)$ (the clear regions); hence, there exist numbers $u,v$
such that $\A{i'}{k}{\phi(k)} = u$ and $\A{i'}{k}{\overline{\phi}(k)} = v$. Assume we have already computed
$q(i,c_{i'}) = \sum_{k'} \A{i'}{k'}{c_{i'}}\ \forall c_{i'}$ (this is the sum over \emph{all} $k$) and also, for
each $k$, the values
$r(k) = \max_{c_{i'} \in \phi(k)} \sum_{k'} \A{i'}{k'}{c_{i'}}$ and
$s(k) = \max_{c_{i'} \in \overline{\phi}(k)} \sum_{k'} \A{i'}{k'}{c_{i'}}$. Then, for any $k$, we can find, in $O(1)$ time, 
$\max_{c_{i'}} \sum_{k' \ne k} \A{i'}{k'}{c_{i'}}$ by ``adjusting'' $\max_{c_{i'}} \sum_{k'} \A{i'}{k'}{c_{i'}}$ as follows:
\[
\max_{c_{i'}} \sum_{k' \ne k} \A{i'}{k'}{c_{i'}} = \max(r(k)-u, s(k)-v)
\]
The latter case ($\max_{c_{i'} \not \ni k} \R{i'}{k}{c_{i'}}$) is even easier since we ignore all $c_{i'} \in \phi(k)$ entirely:
\[
\max_{c_{i'} \not \ni k} \sum_{k' \ne k} \A{i'}{k'}{c_{i'}} = s(k) - v
\]
We have already defined $u=a(i',k)$ and $v=\overline{a}(i',k)$; hence, we have:
\[
b(i',k) = \max(r(k) - a(i',k), s(k) - \overline{a}(i',k)), \qquad \qquad
\overline{b}(i',k) = s(k) - \overline{a}(i',k)
\]

\begin{figure}
\begin{center}
\includegraphics[width=2.25in]{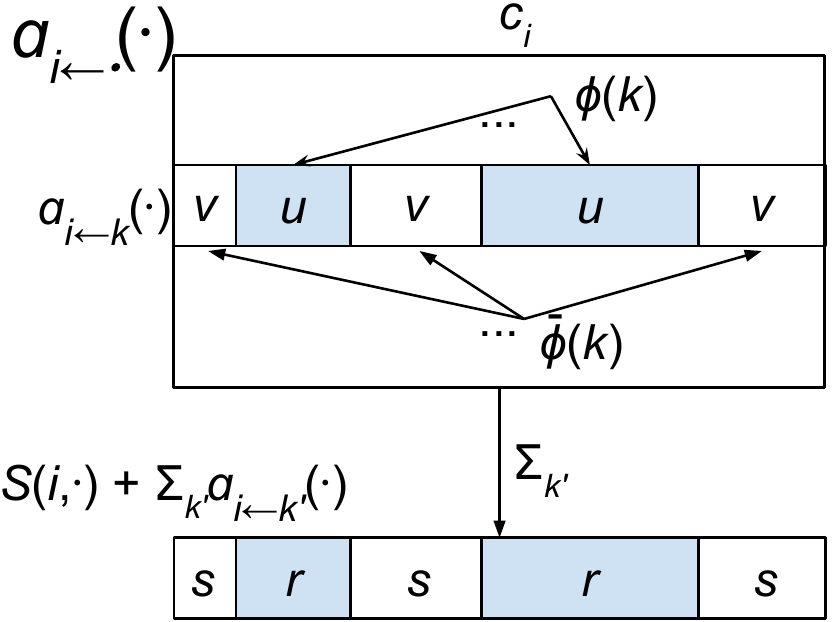}
\end{center}
\caption{For each $i,k$, to compute the max (over $c_{i'}$) of the sum of all rows $k'\ne k$, we can (1) compute the max of
the sum of \emph{all} rows within region $\phi(k)$ and (separately) within region $\overline{\phi}(k)$; (2) adjust each maximum by subtracting the value of row $k$ in region $\phi(k)$ 
and the value of row $k$ in region $\overline{\phi}(k)$, respectively; (3) take the larger result.
}
\label{fig:maxSumE}
\end{figure}

\subsection{Computing $h(k)=\R{k}{k}{k}$}
As the last step, we can compute $h(k)=\R{k}{k}{k} = S(k,\{k\}) + \sum_{k'\ne k} \A{k}{k'}{k} = S(k,\{k\}) + q_k(\{k\}) - a(k,k)$.
This completes the derivation of Algorithm 1.

\subsection{Time Cost Analysis}
As explained in Section \ref{sec:find_all_maxes}, the $\textrm{FindAllMaxes}$ takes time $O(dn^d)$ operations for each $i$.
The function $\textrm{ComputeRhoStats}$ calls $\textrm{FindAllMaxes}$ $n$  times (and also executes $O(n^2)$ further
operations) for a cost of $dn^{d+1}$. The function $\textrm{ComputeAlphaStats}$ takes $O(n^2)$ for the nested for-loops,
and (as explained in Section \ref{sec:q}) a further $O(dn^d)$ for the computation of each $q(i,\cdot)$, amounting to 
$O(dn^{d+1})$ in total.

This completes the proof.

%Figure~\ref{fig:sum_and_max_schematic} shows an 
%\begin{eqnarray*}
%\end{eqnarray*}
%\begin{figure}
%\begin{center}
%\includegraphics[width=\columnwidth]{}
%\end{center}
%\caption{
%Example of the structure of $\phi(k)$ for different values of $k$,
%for the particular scenario where $n=5$ and $\mathcal{C}$ contains all 1-sets and 2-sets in $[n]$.
%For each row $k$, the sets $\phi(k)$ and $\overline{\phi}(k)$ are shown in blue and white, respectively.}
%\label{fig:sum_and_max_schematic}
%\end{figure}

%\section{Evaluation Metric}
%The Compositional Rand Index (CRI) is given by:
%\[
%\textrm{CRI}(c_1,\ldots,c_n, y_1,\ldots,y_n)
%=
%\frac{1}{n(n-1)} \sum_{i\ne j} \mathbb{I}[(c_i \supseteq^? c_j) = (y_i \supseteq^? y_j)]
%\]

\section{Brute-Force Reassignment}
\label{sec:brute_force_reassignment}
Here is how a brute-force reassignment could work:
We first obtain a set of $k$ singleton clusters with
associated exemplar indices $\mathcal{E} \subset [n]$.
Then we iterate over every possible subset $\mathcal{\tilde E} \subseteq \mathcal{E}$; these represent the compositional
clusters. 
For each $\mathcal{\tilde E}$, we conduct an inner-loop
to iterate over every possible 1-to-1 map from $\mathcal{\tilde E}$ to the set of compositions of $\mathcal{E} \setminus \mathcal{\tilde E}$;
these represent the singleton clusters.
If we consider compositions of at most $d$ exemplars, then we have
$\sum_{i=0}^k C(k,i) P\left(\sum_{d'=2}^d C(k-i, d'), i\right)$
total possible maps,
where $C(k,i)$ and $P(k,i)$ are the numbers of combinations and permutations of $i$ objects from a set of $k$,
respectively. The $P$ arises due to iterating over all 1-to-1 maps.
Note that the number of possible maps grows factorially with $k$, and hence it quickly becomes intractable as
$k$ grows (e.g., for $k=15$ and $d=2$, the number of possibilities is $107770296705436$).

% In the code provided in the Supplementary Materials (\begin{tt}ap.py\end{tt}),
% the function \begin{tt}findAllMaps\end{tt} computes the set of all possible maps, and
% the \begin{tt}baselineMethod\end{tt} iterates over all such maps to find the optimal one based on
% $f$ and $g$.

% \subsection{AC+Remap}
% For the AC+Remap baseline method, since AC+Remap is not an exemplar-based method, the centroids returned by the algorithm
% may not correspond to an actual example in $\mathcal{C}$. We thus instead pick the example $x\in \mathcal{X}$
% that is closest to the centroid to be the exemplar for the cluster.
% %\section{MultiMNIST}
% %For CAP and CAP$\subset$, we varied the hyperparameter $\gamma$ over the set $\{300,400,500\}$.
% %For AP and AP+Remap, we varied $\gamma$ over a larger  set: $\{ 
% %  75, 100, 125, 150, 200, 250, 300, 375, 400, 450, 500, 500, 600, 625, 750 \}$.
% %For AC and AC+Remap (using the Ward criterion), we varied the distance threshold hyperparameter
% %over the set $\{ 15, 20, 25, 50, 100 \}$.
% For each method, we selected the hyperparameter based
% on the best \emph{average} (over all 10 trails) clustering accuracy on the test data.

\section{LibriSpeech}
LibriSpeech contains 1000+ hours of recorded English-language speech of people reading audiobooks.
While the dataset contains speech from only individual speakers, we can synthesize speech by
adding  the waveforms of multiple speakers.
Figure \ref{fig:librispeech_examples} shows of an example of how simultaneous speech data is synthesized from LibriSpeech data.

{\bf Compositional embedding model}:
Speaker embeddings were extracted from mel-frequency cepstrum coefficient (MFCC) features (32 coefficients, 0.025s window size, 0.01s step size) using an embedding function $f^\textrm{emb}$ that contains a 2-layer LSTM with 256 hidden units. Composition function $g$ is defined as $g(x_a, x_b) = W_1 x_a + W_1 x_b + W_2 (x_a \odot x_b)$, where $W_1,W_2$ are learnable weights and $x_a, x_b$ are speaker embeddings. $f^\textrm{emb}$ and $g$ were optimized jointly. During training, 15 audio samples from 5 unique speakers (5 labeled with 1 speaker and 10 with 2 speakers) are used to extract reference speaker embeddings using $f^\textrm{emb}$. 20 query speaker embeddings were extracted from the same 5 speakers using $f^\textrm{emb} \& g$, with audio or audio pairs. The distances between reference embeddings and query embeddings are computed and the model is optimized using triplet loss so that the distance between a reference-query pair share the same label is smaller then that of other pairs. After training, the model achieves overall accuracy of 86.9\% on a validation set where each episode contains 20 queries as above.

After function $f^\textrm{emb}$ and $g$ are trained, we selected hyperparameters  based on a separate validation set and then tested on test set. Both the validation set and test set contain 10 groups of data, and all clusters have the same number of samples in each group of data. (For example, in the setting of $l=3,n=120$, there are 6 clusters with labels $\{1\},\{2\},\{3\},\{1,2\},\{1,3\},\{2,3\}$ and each one has 20 samples.) For all methods, hyperparameters are selected for $l=3$ and for $l=5$ (both n=150 and n=495) separately. For CAP/CAP$\subset$ and AP, there is only one hyperparameter, $\gamma$, which we varied over the set $\{1,2,\ldots,7\}$. For AC, there is a distance threshold hyperparameter, which we varied over $\{1,2,3,4,4\}$. These sets of values were chosen in pilot experimentation to give a fair chance to each algorithm; in particular, they were chosen so that the best result, during the validation process, did not fall on the boundary of these sets. 
During the message-passing process, we dampened the values returned by ComputeAlphaStats and ComputeRhoStats
using a damping value of $\lambda = 0.65$: $\textrm{Val} = \textrm{OldVal} * \lambda + \textrm{NewVal} * (1 - \lambda)$.
This value for $\lambda$ was used for CAP, CAP$\subset$, and AP.

\section{OmniGlot}
OmniGlot contains images of handwritten symbols from a variety of languages.
Figure \ref{fig:omniglot_examples} shows 4 groups of examples used in the experiment. Each group contains images with cluster labels $\{1\},\{2\},\{3\},\{1,2\},\{1,3\},\{2,3\}$.
{\bf Compositional embedding model}: For the image embedding function $f^\textrm{emb}$ we used ResNet18. Composition function  $g$ is defined the same as for LibriSpeech. The training procedure of $f^\textrm{emb}$ and $g$ are the same as for LibriSpeech. After training, the embedding model achieves overall accuracy of 75.0\% on validation set.
After function $f^\textrm{emb}$ and $g$ are trained, the hyperparameters are selected in the same way, and from the same sets, as in the LibriSpeech experiment. We used damping just like for LibriSpeech.

%%%%%%%%%%%%%%%%%%%%%%%%%%%%%%%%%%%%%%%%%%%%%%%%%%%%%%%%%%%%%%%%%%%%%%%%%%%%%%%
%%%%%%%%%%%%%%%%%%%%%%%%%%%%%%%%%%%%%%%%%%%%%%%%%%%%%%%%%%%%%%%%%%%%%%%%%%%%%%%

\end{document}